\pgfplotsset{compat=1.17}
\pgfplotsset{
  every axis/.append style={
    grid=major,
    grid style={dashed},
    legend style={font=\tiny, legend columns = 1},
    ylabel style={font=\scriptsize},
    xlabel style={font=\scriptsize},
    width=\linewidth
  },
  every axis plot/.append style={line width=1.2pt, line join=round},
  every axis legend/.append style={legend columns=1},
  every x tick label/.append style={font=\small, alias=XTick,inner xsep=0pt},
  every y tick label/.append style={font=\small, alias=XTick,inner xsep=0pt},
  every x tick scale label/.style={at=(XTick.base east),anchor=base west}
}
\definecolor{col1}{RGB}{36, 61, 93}
\definecolor{col2}{RGB}{77, 143, 145}
\definecolor{col3}{RGB}{228, 206, 135}
\definecolor{col4}{RGB}{186, 161, 22}
\definecolor{col5}{RGB}{45, 3, 59}
\definecolor{col6}{RGB}{193, 71, 233}
\definecolor{col7}{RGB}{126, 134, 255}
\definecolor{col8}{RGB}{85, 170, 255}
\definecolor{darkyellow}{rgb}{0.85, 0.65, 0} 
\tikzset{
  curve1/.style={col1, mark=*, solid}, 
  curve2/.style={col2, mark=x, solid}, 
  curve3/.style={col3, mark=*, densely dashed}, 
  curve4/.style={col4, mark=x, dashed},
  curve5/.style={col5, mark= o,solid},
  curve6/.style={col6},
  curve7/.style={col7},
  curve8/.style={col8},
  curve9/.style={col1, densely dotted},
  curve10/.style={col2, dotted},
  curve11/.style={col3, dashdotted},
  curve12/.style={col4, dashed},
  curve13/.style={col5, dotted},
}
\newcommand{\R}{\mathbb{R}}
\newcommand{\Rd}{\mathbb{R}^{d}}
\newcommand{\Sd}{\mathbb{S}^{d-1}}
\newcommand{\W}{W_2^2}
\newcommand{\SW}{SW_2^2}
\newcommand{\pit}{\pi_{\theta}}
\newcommand{\pmu}{\pit \# \mu}
\newcommand{\pnu}{\pit \# \nu}
\newcommand{\sigtheta}{\sigma_{\theta}}
\newcommand{\tautheta}{\tau_{\theta}}
\newcommand{\std}{\mathbb{\sigma}}
\newcommand{\N}{\mathbb{N}}
\newcommand{\lambdim}{\lambda^{\otimes d}}
\newcommand{\RanP}{\hat{P}}
\newcommand{\Ranu}{\hat{u}}
\newcommand{\Zq}{\mathbb{Z}/2\mathbb{Z}}
\newcommand{\sd}{s_{d-1}}
\newcommand{\tcos}{\text{cos}}
\newcommand{\tsin}{\text{sin}}
\newcommand{\tacos}{\text{arccos}}
\newcommand{\tspan}{\text{Span}}
\newcommand{\SWsq}{SW_{2}^{4}}
\newcommand{\M}{\mathcal{M}}
\newcommand{\subf}{f_{-\infty}^{-1}}
\newcommand{\I}{\mathcal{I}}
\newcommand{\C}{\mathbb{C}_{d,2}}
\DeclareMathOperator*{\argmin}{arg\,min}
\newtheoremstyle{theoremstyle} 
    {\topsep}                    
    {\topsep}                    
    {}                   
    {}                           
    {\bfseries}                   
    { :}                          
    {.5em}                       
    {}  
\theoremstyle{theoremstyle}
\newtheorem{D}{Definition}
\newtheorem{Prop}{Proposition}
\newtheorem{Rk}{Remark}
\definecolor{customgreen}{rgb}{0.156, 0.706, 0.239} 
\title{A User's Guide to Sampling Strategies for Sliced Optimal Transport}
\author{\name Keanu SISOUK \email Keanu.Sisouk@lip6.fr \\
      \addr CNRS, LIP6\\
      Sorbonne University
      \AND
      \name Julie Delon \email julie.delon@u-paris.fr \\
      \addr MAP5\\
      Paris Cite University
      \AND
      \name Julien Tierny \email julien.tierny@sorbonne-universite.fr\\
      \addr CNRS, LIP6 \\
      Sorbonne University\\
     	}
\begin{document}
\maketitle

\begin{abstract}

This paper serves as a user's guide to sampling strategies for sliced optimal transport ~\citep{Rabin_texture_mixing_sw,bonneel2015sliced}. 
We provide reminders and additional regularity results on the Sliced Wasserstein distance.
We detail the construction methods, generation time complexity, theoretical guarantees, and conditions for each strategy. Additionally, we provide insights into their suitability for sliced optimal transport in theory. Extensive experiments on both simulated and real-world data offer a representative comparison of the strategies, culminating in practical recommendations for their best usage.
\end{abstract}

\section{Introduction}

The Wasserstein distance is acclaimed for its geometric relevance in comparing probability distributions. Having gathered a lot of theoretical work~\citep{santambrogio2015optimal,villani2008optimal}, it has also proved to be relevant in numerous applied domains in the last fifteen years, such as image comparison~\citep{rabin2009statistical}, image registration~\citep{feydy2017optimal}, domain adaptation~\citep{courty2016optimal}, generative modeling~\citep{arjovsky2017wasserstein,gulrajani2017improved,salimans2018improving}, inverse problems in imaging~\citep{hertrich2022wasserstein} or topological data analysis \citep{edelsbrunner09,ensembleBenchmark}, to name just a few. The computational demands of the Wasserstein distance are, however, quite high, since evaluating the distance between two discrete distributions of $N$ samples with traditional linear programming methods incurs a runtime complexity of \(O(N^3 \log N)\)~\citep{peyre2019computational}. This computational burden has motivated the development of alternative metrics sharing some of the {desirable} properties of the Wasserstein distance but with reduced complexity.

The Sliced Wasserstein (SW) 
distance~\citep{Rabin_texture_mixing_sw,bonneel2015sliced}, defined by slicing 
the Wasserstein distance along all possible directions on the hypersphere, is 
one of these efficient alternatives. Indeed, 
{the SW}
distance maintains the core properties of the 
Wasserstein distance but with reduced computational overhead. For compactly 
supported measures, Bonnotte~\citep{bonnotte2013phd} showed for instance that 
the two distances are equivalent. Again, it has been successfully applied in 
various domains, such as domain adaptation~\citep{lee2019sliced}, texture 
synthesis and style transfer~\citep{heitz2021sliced,Elnekave:2022aa}, generative 
modeling~\citep{deshpande2018generative,wu2019sliced}, regularizing 
autoencoders~\citep{kolouri2018sliced}, shape 
matching~\citep{le2024integrating}, and has even been adapted on Riemaniann 
manifolds~\citep{bonet2024sliced}.

The SW distance between two measures $\mu$ and $\nu$ can be written as the expectation of the one dimensional Wasserstein distance between the projections of $\mu$ and $\nu$ on a line whose direction is drawn uniformly on the hypersphere. It benefits from the simplicity of the Wasserstein distance computation in one dimension. In practice, computing the expectation on the hypersphere is unfeasible, so it is estimated thanks to numerical integration. The most common method for approximating the SW distance is to rely on Monte Carlo approximation, by sampling $M$ random directions uniformly on the hypersphere and approximating the integral by an average on these directions. Since the Wasserstein distance in 1D between two measures of $N$ samples can be computed in \(O(N \log N)\), computing this empirical version of Sliced Wasserstein has a runtime complexity of \(O(M N \log N)\). This complexity makes it a compelling alternative to the Wasserstein distance, especially when the number $N$ of samples is high.

As a Monte Carlo approximation, the law of large numbers ensures that this 
empirical Sliced Wasserstein distance converges to the true expectation, with a 
convergence rate of \(O(\frac{1} {\sqrt{M}})\). This convergence speed is slow 
but independent of the space dimension. However,  it is important to keep in 
mind that to preserve some of the properties of the 
{SW}
distance, the number $M$ of directions should increase 
with the dimension. For instance,  it has been shown that for the empirical 
distance to almost surely separate discrete distributions (in the sense that if 
the distance between two distributions is zero then the two distributions are 
almost surely equal), the number of directions $M$ must be chosen strictly 
larger than the space dimension~\citep{tanguy2023reconstructing}. 

Classical Monte Carlo with independent samples is not always optimal, since 
independent random samples do not cover the space efficiently, creating clusters 
of points and leaving holes between these clusters. In very low dimension, 
quadrature rules provide efficient alternative methods to classical Monte Carlo. 
On the circle for instance, the simplest solution is to replace the $M$ random 
samples by the roots of unity $\{e^{i \frac{2k\pi}{M}}\ |\ \;0\le k \le M-1\}$: 
 since the function that we wish to integrate is Lipschitz, this 
ensures that 
{the}
integral approximation converges at speed $O(\frac 1 M)$. However, such 
quadrature rules are unsuitable for high-dimensional problems, as they require 
an exponential number of samples to achieve a given level of accuracy. 

Another alternative sampling strategy is to rely on quasi-Monte Carlo (Q.M.C.) methods, which use deterministic, low-discrepancy sequences instead of independent random samples. 
Traditional 
{Q.M.C.}
methods are designed for integration over the unit hypercube $[0, 1]^d$. The 
quality of a {Q.M.C.} sequence is often measured by its discrepancy, which 
measures how uniformly the points cover the space. A lower discrepancy 
correlates with a better approximation, according to the Koksma-Hlawka 
inequality~\citep{brandolini2013koksma}. Examples of low-discrepancy sequences 
for the unit cube include for instance the Halton 
sequence~\citep{halton1964algorithm},  and the Sobol 
sequence~\citep{sobol1967distribution}, and different approaches have been 
investigated to project such sequences on the hypersphere. While quadrature 
rules are recommended for very small dimensions ($d = 1$ or $2$ for instance), 
{Q.M.C.} integration is particularly effective in low to intermediate 
dimensions. A variant of low-discrepancy sequence is one where randomness is 
injected in the sequence while preserving its "low discrepancy" property. Such a 
sequence is called a randomized low-discrepancy sequence, and this is the 
foundation to randomized quasi-Monte Carlo
{(R.Q.M.C.)}
methods ~\citep{owen2019monte}.  
Q.M.C. methods do not only rely on low-discrepancy {sequences}, but can also use 
point sets of a given size directly optimized to have low-discrepancy, such as  
s-Riesz point configurations on the sphere ~\citep{GOTZ200362}. However {Q.M.C.} 
and {R.Q.M.C.} methods on the sphere have {a strong practical downside}: they 
suffer from the curse of dimensionality. Indeed the higher the dimension the 
harder it is to generate samples with {Q.M.C.} and {R.Q.M.C.} approaches. 
Moreover, the higher the dimension, the slower the convergence rate, and the 
more regular the integrand needs to be to ensure fast convergence. 
The recent paper~\citep{nguyen2024quasimonte} already proposes an interesting 
comparison of such {Q.M.C.} methods to approximate  
{the}
Sliced-Wasserstein 
{distance}
in dimension 3, showing that such methods could provide 
better approximations  that conventional {M.C.} in this specific dimensional 
setting. 

All the sampling strategies mentioned above are designed to provide a good coverage of the space. However, they do not take into account the specific structure of the integrand, which is the Wasserstein distance between the one dimensional projections of the two measures $\mu$ and $\nu$. More involved methods to improve Monte Carlo efficiency include importance sampling, control variates or stratification~\citep{asmussen2007stochastic}. Such variance reduction techniques strategies can also be used in conjunction with quasi-Monte Carlo integration.
Control variates have been explored for Sliced Wasserstein approximation in~\citep{Nguyen:2023aa} and \citep{leluc2024slicedwassersteinestimationsphericalharmonics}, showing interesting improvements in intermediate dimensions over classical Monte Carlo. 

The goal of this survey is to provide a detailed comparison of these different sampling strategies for the computation of Sliced-Wasserstein in various dimensional settings. It is intended as a user-guide to help practitioners choose the appropriate sampling strategy for their specific problem, depending on the size and dimension of their data, and the type of experiments to be carried out (whether or not they need to compute numerous SW distances for instance). We will also look at the particularities of the different approaches, some being more appropriate than others depending on whether a given level of accuracy is desired (in which case an approach allowing sequential sampling is preferable to one requiring optimization of a point set) or, on the contrary, a given computation time is imposed.  
We will mainly focus on sampling strategies which are independent of the knowledge of the measures $\mu$ and $\nu$, such as uniform random sampling~\citep{asmussen2007stochastic},
 orthonormal sampling~\citep{rowland2019orthogonal}, low-discrepancy sequences mapped on the sphere \citep{halton1964algorithm,sobol1967distribution}, randomized low-discrepancy sequences mapped on the spheres \citep{owen2019monte}, Fibonacci point sets~\citep{hardin2016comparison} and Riesz configuration point sets~\citep{GOTZ200362}.

For the sake of completeness, we  also include in our comparison the recent  approach~\citep{leluc2024slicedwassersteinestimationsphericalharmonics}, which appears to be the most efficient among recent control variates approaches proposed to approximate Sliced Wasserstein.

The paper is organized as follows. \autoref{sec:reminderSW} introduces some reminders on the 
Sliced Wasserstein distance such as its definition and some regularity 
properties. \autoref{sec:Sampling} explores all the sampling methods considered 
in this paper, hightlighting their theoretical guarantees, the conditions under 
which they can be used, and identifying which methods suffer from the curse of 
dimensionality. Then \autoref{sec:Exp} provides a comparison of each sampling 
method's experimental results on different datasets. Finally, in 
\autoref{sec:Recommendation} we offer detailed recommendations for choosing and 
using these 
{sampling}
methods effectively in practice.

\section{Reminders on the Sliced Wasserstein Distance} \label{sec:reminderSW}




\subsection{Definition}\label{sec:def}
{In the following, we write $\langle \cdot \ | \ \cdot \rangle$ the Euclidean inner product in $\Rd$, $\|\cdot\|$ the induced norm, $\Sd = \{ x\in \Rd \ |\ \|x\| = 1\}$ the unit
sphere of $\R^d$.}
{For $\theta\in\Sd$, we write $\pit:\Rd
\rightarrow \R$ the map $x \mapsto \langle \theta|x \rangle$,
$\sd$ the uniform measure over $\Sd$. 
We also denote $\#$ the push-forward
operation}~\footnote{The push-forward of a measure  $\mu$ on $\R^d$ by
  an application $T: \R^d \rightarrow \R^k$ is defined as a measure
  $T\#\mu$ on $\R^k$ such that for all Borel sets $B \in
  \mathcal{B}(\R^k), T\#\mu(B) = \mu(T^{-1}(B))$.}.

For two probability measures $\mu$ and
$\nu$ supported in $\R^d$ {and with finite moments of order 2}, the Sliced Wasserstein
Distance between  $\mu$ and
$\nu$ is defined as
\begin{equation} \label{eq:SW}
\SW(\mu,\nu)= \mathbb{E}_{\theta \sim \mathcal{U}(\Sd)}[\W(\pmu, \pnu)] = \displaystyle\int_{\Sd}\W(\pmu, \pnu) d\sd(\theta).
\end{equation}

This distance, introduced
in~\citep{Rabin_texture_mixing_sw},  has been thoroughly studied and
used as a dissimilarity measure between probability distributions in machine
learning~\citep{bonneel2015sliced,nadjahi2021sliced,kolouri2018sliced},
and more generally as an alternative to the Wasserstein distance. 
Its simplicity stems from the fact that the
Wasserstein distance between two probability measures in one dimension has
an explicit formula. Indeed, for two probability
measures $\rho_1$ and $\rho_2 $ on the line, the Wasserstein distance   $W_2(\rho_1,\rho_2)$ can be written
\begin{equation}
\label{eq:transport_ligne}
 W_2^2(\rho_1,\rho_2) =  \int_0^1 |F_1^{-1}(t)- F_2^{-1}(t)|^2dt,
\end{equation}
where  $F_1$ and $F_2$ are the cumulative distribution
functions of $\rho_1$ and $\rho_2 $, and $F_1^{-1}$ and $F_2^{-1}$ are
their respective
generalized inverses (see~\citep{santambrogio2015optimal} Proposition
2.17).  For two one dimensional discrete measures $\rho_1 =
\frac{1}{N}\sum\limits_{k=1}^N \delta_{x_k}$ and  $\rho_2 =
\frac{1}{N}\sum\limits_{k=1}^N \delta_{y_k}$, this distance becomes
\begin{equation}\label{eq:transport_ligne_2D_disc}
W_2^2(\rho_1,\rho_2) = \frac 1 N \sum_{k=1}^N |x_{\sigma(k)} -
y_{\tau(k)}|^2,
\end{equation}
where $\sigma$ and $\tau$ are permutations of
$\llbracket 1,N\rrbracket$ which respectively order the sets $ \{x_1,\dots,x_N\}$ and
$\{y_1,\dots,y_N\}$ on the line. 
\begin{figure}[!h]  
\begin{subfigure}{\textwidth}
\begin{center}
 \begin{tikzpicture}
    \begin{scope}[xscale = 1.5, yscale = 1.5, >=stealth]
 \draw[->] (0, -.5) -- (0, 3) node [left] {}; 
      \draw[->] (-.5, 0) -- (4, 0) node [above] {} ; 
\draw[darkyellow] (.25,1.5) node {$\bullet$} ;
\draw[darkyellow] (1,.1) node {$\bullet$} ;
\draw[darkyellow] (1.25,1.5) node {$\bullet$} ;
\draw[darkyellow] (2,3.5) node {$\bullet$} ;
\draw[darkyellow] (2.25,.5) node {$\bullet$} ;
\draw[blue] (4,1) node {$\bullet$} ;
\draw[blue] (.5,.1) node {$\bullet$} ;
\draw[blue] (1.25,3.5) node {$\bullet$} ;
\draw[blue] (2.5,1.5) node {$\bullet$} ;
\draw[blue] (0.83, 1.06) node {$\bullet$} ;
 \end{scope}
  \end{tikzpicture}
 \begin{tikzpicture}
    \begin{scope}[xscale = 1.5, yscale = 1.5, >=stealth]
 \draw[->] (0, -.5) -- (0, 3) node [left] {};
      \draw[->] (-.5, 0) -- (4, 0) node [above] {} ;
 \draw[-] (0, 0) -- (3.6, 2.7) node [right] {$\theta$} ;
\draw[densely dotted] (.25, 1.5) -- (.88,.66 ) node [right] {} ;
\draw[darkyellow] (.88,.66) node {$\circ$} ;
\draw[darkyellow] (.25,1.5) node {$\bullet$} ;
\draw[densely dotted] (1, .1) -- (.688,.516 ) node [right] {} ;
\draw[darkyellow] (.688,.516 ) node {$\circ$} ;
\draw[darkyellow] (1,.1) node {$\bullet$} ;
\draw[densely dotted] (1.25, 1.5) -- (1.52,1.14 ) node [right] {} ;
\draw[darkyellow] (1.52,1.14 ) node {$\circ$} ;
\draw[darkyellow] (1.25,1.5) node {$\bullet$} ;
\draw[densely dotted] (2,3.5 ) -- (2.96,2.22) node [right] {} ;
\draw[darkyellow] (2.96,2.22 ) node {$\circ$} ;
\draw[darkyellow] (2,3.5) node {$\bullet$} ;
\draw[densely dotted] (2.25,.5) -- (1.68,1.26 ) node [right] {} ;
\draw[darkyellow] (1.68, 1.26) node {$\circ$} ;
\draw[darkyellow] (2.25,.5) node {$\bullet$} ;
\draw[densely dotted] (4, 1) -- (3.04,2.28) node [right] {} ;
\draw[blue] (3.04,2.28) node {$\circ$} ;
\draw[blue] (4,1) node {$\bullet$} ;
\draw[densely dotted] (.5, .1) -- (.368,.276) node [right] {} ;
\draw[blue] (.368,.276) node {$\circ$} ;
\draw[blue] (.5,.1) node {$\bullet$} ;
\draw[densely dotted] (1.25, 3.5) -- (2.48,1.86 ) node [right] {} ;
\draw[blue] (2.48,1.86) node {$\circ$} ;
\draw[blue] (1.25,3.5) node {$\bullet$} ;
\draw[densely dotted] (2.5,1.5) -- (2.32,1.74) node [right] {} ;
\draw[blue] (2.32,1.74) node {$\circ$} ;
\draw[blue] (2.5,1.5) node {$\bullet$} ;
\draw[densely dotted] (0.83, 1.06) -- (1.04,0.78) node [right] {} ;
\draw[blue] ( 1.04,0.78) node {$\circ$} ;
\draw[blue] (0.83, 1.06) node {$\bullet$} ;

\draw[->, thick, color=customgreen] (.88,.66) to[out=0, in=260] (1.04,0.78); 
\draw[->, thick, color=customgreen] (.688,.516) to[out=0, in=260] (.368,.276);
\draw[->, thick, color=customgreen] (1.52,1.14) to[out=0, in=260] (2.32,1.74); 
\draw[->, thick, color=customgreen] (1.68,1.26) to[out=0, in=260] (2.48,1.86); 
\draw[->, thick, color=customgreen] (2.96,2.22) to[out=0, in=260] (3.04,2.28);
 \end{scope}
  \end{tikzpicture}
\caption{On the left, {we can see} the two discrete distributions $\mu$ 
{(blue points)} and $\nu$ {(yellow points)}. On the right, 
{we have} their projections $\pmu$ {(blue circles)} and $\pnu$ 
{(yellow circles)}
along the direction $\theta$. One then takes the increasing ordering of $\pmu$ 
and $\pnu$, 
{to obtain}
the corresponding matchings {(green arrows)} and computes the cost 
{following}
\autoref{eq:transport_ligne_2D_disc}.}
\label{fig:projection}
\end{center}
\end{subfigure}
\newline
\begin{subfigure}{\textwidth}
\begin{center}
\includegraphics[scale=0.335]{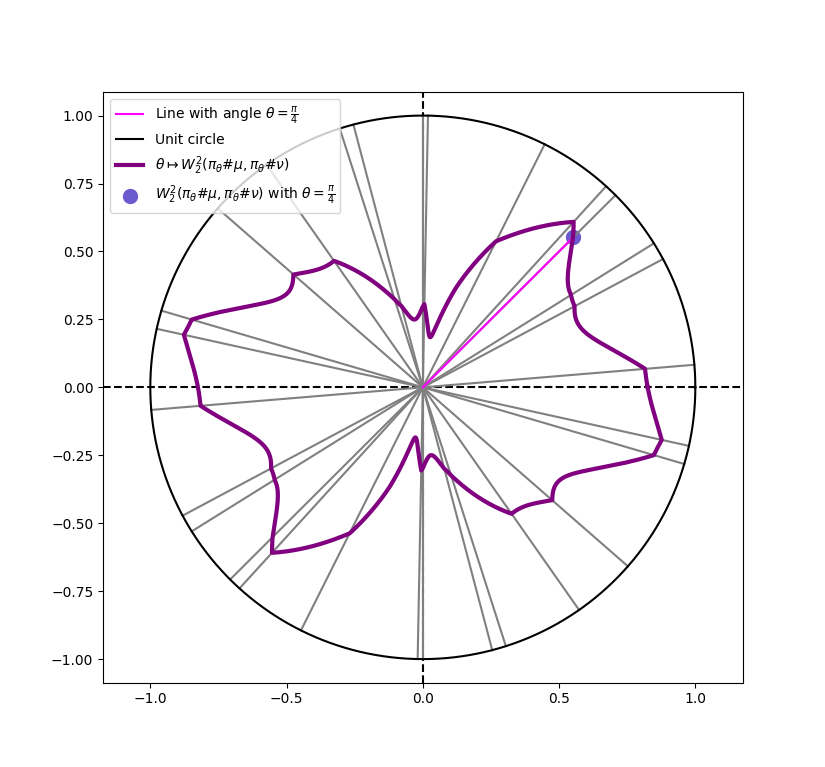}
\includegraphics[scale=0.3615]{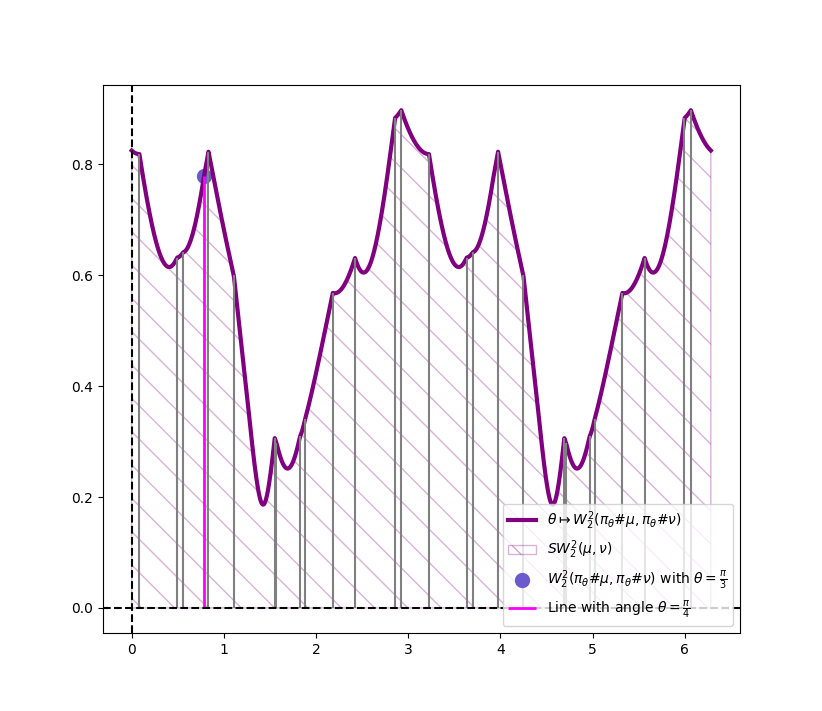}
\caption{On the left, we have a plot of $\theta \mapsto \W(\pit \# \mu, \pit \# 
\nu)$ in polar coordinates, 
with 
{the distributions}
$\mu$ and $\nu$ 
{from \autoref{fig:projection} (top).}
The grey lines represent the 
angles where $\theta \mapsto \W(\pit \# \mu, \pit \# \nu)$ is not 
differentiable, the magenta line is the line of angle $\theta = \frac{\pi}{3}$ 
and the blue dot is a specific value of $\W(\pit \# \mu, \pit \# \nu)$ with the 
same angle. On the right, we have a $1D$ plot of $\theta \mapsto \W(\pit \# \mu, 
\pit \# \nu)$, here the hashed area represents $\SW(\mu,\nu)$ and again the 
vertical grey lines represent the values where $\theta \mapsto \W(\pit \# \mu, 
\pit \# \nu)$ is not differentiable. }
\label{fig:SW}
\end{center}
\end{subfigure}
\label{fig:proj&SW}
\caption{On the first row, \autoref{fig:projection} illustrates the 
{computation}
of $\W(\pit \# \mu, \pit \# \nu)$ for a fixed $\theta$. On the second 
row, \autoref{fig:SW} gives a geometrical illustration of $\SW(\mu,\nu)$ with 
$\mu,\nu$ taken as in \autoref{fig:projection}.}
\label{fig:overallSWcomputation}
\end{figure}

As a consequence, the Sliced Wasserstein distance between two discrete
probability measures $\mu =
\frac{1}{N}\sum\limits_{k=1}^N \delta_{x_k}$ and  $\nu =
\frac{1}{N}\sum\limits_{k=1}^N \delta_{y_k}$ on $\Rd$ (i.e. with
$(x_k)_{k=1,\hdots,N},(y_k)_{k=1,\hdots,N} \in \Rd$) can be rewritten{:} 
\begin{equation} \label{eq:SW_discrete}
\SW(\mu,\nu)= \frac{1}{N}\sum\limits_{k=1}^N\displaystyle\int_{\Sd}
(\langle x_{\sigma_{\theta}(k)} - y_{\tau_{\theta}(k)},\theta\rangle)^2
d\sd(\theta) = \frac{1}{N}\sum\limits_{k=1}^N\displaystyle\int_{\Sd}
(\langle x_{k} - y_{\tau_{\theta} \circ \sigma_{\theta}^{-1}(k)},\theta\rangle)^2
d\sd(\theta)  ,
\end{equation}
where
$\sigma_{\theta}$ and $\tau_{\theta}$ denotes respectively 
permutations which order the one dimensional  point sets $(\langle x_k,\theta\rangle)_{k=1,\hdots,N}$
and $(\langle y_k,\theta\rangle)_{k=1,\hdots,N}$. 
{\autoref{fig:overallSWcomputation}}
illustrates the
computation of $\W(\pmu, \pnu)$ for two discrete measures in two dimensions 
{(\autoref{fig:projection})}, and 
{shows}
how this quantity varies when $\theta$ spans $[0,2\pi]$ 
{(\autoref{fig:SW})}.

\noindent Since the permutations $\sigma_{\theta}$ and $\tau_{\theta}$
depends on the direction $\theta$, 
the 
{integrals}
in \autoref{eq:SW} and \autoref{eq:SW_discrete}
do not have closed forms. For this reason, practitioners rely on Monte
Carlo approximations of the form{:}
\begin{equation}
  \label{eq:SW_MC}
 \frac{1}{NM}\sum\limits_{k=1}^N \sum\limits_{j=1}^M\displaystyle
(\langle x_{\sigma_{\theta_j}(k)} - y_{\tau_{\theta_j}(k)},\theta_j\rangle)^2,
\end{equation}
where $\theta_1,\dots,\theta_M$ are i.i.d. and follow a uniform
distribution on the sphere.
 Classically, the convergence rate of such Monte Carlo estimations to
 SW is  $\mathcal{O}(\frac 1 {\sqrt{M}})$ ~{\citep{hammersley1964monte}}.
 In this context, it is natural to question the optimality of sampling
 methods to approximate SW efficiently in different scenarios.

\subsection{Regularity results on $\theta \mapsto \W(\pit \# \mu, \pit \# \nu)$} \label{sec:regularityProp}
The efficiency of sampling strategies used in numerical integration is
highly dependent on the regularity of the functions to be
integrated. For this reason, in the following we give some properties
of the function {(\autoref{fig:SW}):}
\begin{equation}
f: \theta\mapsto \W(\pit \# \mu, \pit \# \nu)\label{eq:f}
\end{equation}
on the
hypersphere $\Sd$. 
{We  first look at classical regularity properties of $f$}.

\begin{Prop}
{$f$ is Lipschitz on $\Sd$. 
}
\end{Prop}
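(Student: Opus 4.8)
The plan is to reduce the statement about the squared quantity $f=\W(\pmu,\pnu)$ to a Lipschitz\emph{-and-bounded} statement about the unsquared map $g:\theta\mapsto W_2(\pmu,\pnu)$, exploiting that $W_2$ is itself a metric on one-dimensional measures with finite second moment. Throughout I write $M_\mu=\int_{\Rd}\|x\|^2\,d\mu(x)$ and $M_\nu=\int_{\Rd}\|x\|^2\,d\nu(x)$, both finite by the order-$2$ moment assumption, and I expect a final Lipschitz constant of the form $C\,(M_\mu^{1/2}+M_\nu^{1/2})^2$.

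First I would check that $g$ is \emph{bounded}. Since $\pmu$ lives on the line with second moment $\int_{\Rd}\langle\theta\,|\,x\rangle^2\,d\mu(x)\le M_\mu$ (Cauchy--Schwarz, $\|\theta\|=1$), comparing $\pmu$ and $\pnu$ to the Dirac mass at the origin and using the triangle inequality for $W_2$ gives $g(\theta)=W_2(\pmu,\pnu)\le M_\mu^{1/2}+M_\nu^{1/2}=:B$ for every $\theta\in\Sd$.

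The core estimate is a bound on $W_2(\pit\#\mu,\pi_{\theta'}\#\mu)$ for two directions $\theta,\theta'$. The key idea is to use the explicit coupling obtained by pushing $\mu$ forward through the product map $x\mapsto(\langle\theta\,|\,x\rangle,\langle\theta'\,|\,x\rangle)$, which is admissible for the pair $(\pit\#\mu,\pi_{\theta'}\#\mu)$. Taking it as a (suboptimal) transport plan yields
\[
\W(\pit\#\mu,\pi_{\theta'}\#\mu)\ \le\ \int_{\Rd}\big|\langle\theta-\theta'\,|\,x\rangle\big|^2\,d\mu(x)\ \le\ \|\theta-\theta'\|^2\,M_\mu,
\]
and likewise for $\nu$. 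Applying the reverse triangle inequality for the metric $W_2$ then gives
\[
|g(\theta)-g(\theta')|\ \le\ W_2(\pit\#\mu,\pi_{\theta'}\#\mu)+W_2(\pit\#\nu,\pi_{\theta'}\#\nu)\ \le\ (M_\mu^{1/2}+M_\nu^{1/2})\,\|\theta-\theta'\|,
\]
so $g$ is Lipschitz.

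Finally, since $f=g^2$ with $g$ bounded by $B$ and Lipschitz, I conclude via $|f(\theta)-f(\theta')|=|g(\theta)+g(\theta')|\,|g(\theta)-g(\theta')|\le 2B\,(M_\mu^{1/2}+M_\nu^{1/2})\,\|\theta-\theta'\|$; as $\Sd$ is compact and the chordal distance $\|\theta-\theta'\|$ controls the geodesic distance up to a constant, this is genuine global Lipschitz continuity. I do not anticipate a real obstacle here: the only mildly delicate point is recognizing that the product push-forward $(\pi_\theta,\pi_{\theta'})\#\mu$ supplies a valid coupling, which neatly sidesteps any need to track how the quantile functions $F_\theta^{-1}$ vary with $\theta$ (the route suggested by \autoref{eq:transport_ligne}, which would be considerably messier).
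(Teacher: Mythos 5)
Your proof is correct and follows essentially the same route as the paper's: the reverse triangle inequality for $W_2$ combined with the suboptimal coupling $(\pi_\theta,\pi_{\theta'})\#\mu$ (which the paper writes as restricting the infimum over couplings to $Y=X$), yielding $\W(\pi_\theta\#\mu,\pi_{\theta'}\#\mu)\le\|\theta-\theta'\|^2\,\mathbb{E}_{X\sim\mu}[\|X\|^2]$. If anything, your write-up is slightly more complete than the paper's: the paper stops after showing that the \emph{unsquared} map $\theta\mapsto W_2(\pmu,\pnu)$ is Lipschitz, whereas you explicitly add the boundedness of that map and the elementary identity $|g^2(\theta)-g^2(\theta')|=|g(\theta)+g(\theta')|\,|g(\theta)-g(\theta')|$ needed to conclude for the squared distance $f=\W$, which is what the proposition actually asserts.
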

\begin{proof}
{
Let $\mu$ and $\nu$ be two probability measures  with finite moments of order 2, and $\theta_1,\theta_2\in\Sd$. The triangular inequality on $W_2$ yields
$$\left| W_2(\pi_{\theta_1}\# \mu, \pi_{\theta_1}\# \nu) - W_2(\pi_{\theta_2}\# \mu, \pi_{\theta_2}\# \nu)\right| \leq W_2(\pi_{\theta_1}\# \mu,\pi_{\theta_2}\# \mu) +  W_2(\pi_{\theta_1}\# \nu,\pi_{\theta_2}\# \nu).$$
We also have
\begin{equation*}
W_2^2(\pi_{\theta_1}\# \mu,\pi_{\theta_2}\# \mu)  =  {\inf_{X\sim\mu, Y\sim\mu}\mathbb{E}\left[| \langle \theta_1 , X \rangle - \langle \theta_2, Y \rangle |^2\right]}
 \leq {\inf_{X\sim\mu}\mathbb{E}\left[|\langle \theta_1 - \theta_2, X\rangle |^2\right]}\\
 \leq \|\theta_1 - \theta_2\|^2 {\mathbb{E}_{X\sim\mu}[\|X\|^2]}.
\end{equation*}
We can show similarly that $W_2^2(\pi_{\theta_1}\# \nu,\pi_{\theta_2}\# \nu)\leq \|\theta_1 - \theta_2\|^2 {\mathbb{E}_{X\sim\nu}[\|X\|^2]}$. Thus 
$$\left| W_2(\pi_{\theta_1}\# \mu, \pi_{\theta_1}\# \nu) - W_2(\pi_{\theta_2}\# \mu, \pi_{\theta_2}\# \nu)\right| \leq \|\theta_1 - \theta_2\| \left( \sqrt{\mathbb{E}_{X\sim\mu}[\|X\|^2]}+\sqrt{\mathbb{E}_{X\sim\nu}[\|X\|^2]}\right).$$}
\end{proof}
{Since $f$ is Lipschitz continuous, it is  differentiable almost everywhere.
However the previous result does not give us the set where $f$ is non differentiable. In the following we give a more complete proof when $\mu$ and $\nu$ are discrete following the notations introduced in \autoref{sec:def}.}

\begin{Prop}
When $\mu$ and $\nu$ are finite discrete measures, $f$ piecewise $\mathcal{C}^{\infty}$
($\mathcal{C}_{pw}^{\infty}$) and Lipschitz on $\Sd$. 
 \end{Prop}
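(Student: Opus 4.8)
The plan is to exploit the explicit formula \autoref{eq:SW_discrete}: writing $\mu = \frac1N\sum_{k}\delta_{x_k}$ and $\nu = \frac1N\sum_k\delta_{y_k}$, one has for a fixed direction
$$f(\theta) = \frac1N\sum_{k=1}^N \langle x_{\sigtheta(k)} - y_{\tautheta(k)},\theta\rangle^2,$$
where $\sigtheta$ and $\tautheta$ are the permutations sorting the projected clouds $(\langle x_k,\theta\rangle)_k$ and $(\langle y_k,\theta\rangle)_k$. The central idea is that $f$ is smooth wherever these sorting permutations are locally constant, and that they can only change across a finite, explicitly describable set of directions.

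First I would identify the singular set. The relative order of two projections $\langle x_i,\theta\rangle$ and $\langle x_j,\theta\rangle$ can only change when they coincide, i.e.\ when $\langle x_i - x_j,\theta\rangle = 0$. Hence I define
$$\mathcal{S} = \bigcup_{\substack{i<j\\ x_i\neq x_j}}\{\theta\in\Sd : \langle x_i - x_j,\theta\rangle = 0\}\ \cup\ \bigcup_{\substack{i<j\\ y_i\neq y_j}}\{\theta\in\Sd : \langle y_i - y_j,\theta\rangle = 0\},$$
a finite union of great subspheres (intersections with $\Sd$ of hyperplanes through the origin). Each is a smooth $(d-2)$-submanifold, so $\mathcal{S}$ is closed with empty interior, and its complement $\Sd\setminus\mathcal{S}$ has finitely many connected components, namely the open cells of this central hyperplane arrangement restricted to the sphere.

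Next I would show that on each connected component of $\Sd\setminus\mathcal{S}$ the permutations $\sigtheta,\tautheta$ are constant. Away from $\mathcal{S}$ the projected values of non-equal source points are pairwise distinct, so the sorting order is fixed by strict inequalities; these are stable under small perturbations of $\theta$, hence the permutations are locally constant, and a locally constant map on a connected set is constant. On such a component $f$ therefore coincides with a single quadratic form $\theta\mapsto \frac1N\sum_k\langle x_{\sigma(k)} - y_{\tau(k)},\theta\rangle^2$ for fixed $\sigma,\tau$. This is a polynomial in $\theta$, whose restriction to the smooth manifold $\Sd$ is $\mathcal{C}^\infty$ and in particular restricts smoothly to the closure of the component. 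Taking the finitely many closures of these components as the pieces, $f$ restricted to each piece equals the restriction of a polynomial, hence is $\mathcal{C}^\infty$; continuity of $f$ across the shared boundaries is guaranteed by the Lipschitz property of the previous proposition, so these candidate smooth extensions genuinely glue to $f$. This is precisely the assertion that $f$ is $\mathcal{C}_{pw}^\infty$, and Lipschitzness is already established.

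I expect the main obstacle to be the bookkeeping around ties and the precise notion of \emph{piece}: one must argue that $f(\theta)$ is unambiguous even where the sorting permutation is non-unique (on the line, breaking ties arbitrarily does not change $\W$), and that the closed spherical cells form an admissible finite decomposition with piecewise-smooth boundaries. Verifying that the polynomials associated with neighbouring cells agree on the common great-subsphere boundary — which follows from continuity of $f$ — is what licenses calling the global function piecewise $\mathcal{C}^\infty$ rather than merely smooth on an open dense subset of $\Sd$.
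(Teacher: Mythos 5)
Your proposal is correct and follows essentially the same route as the paper's proof: both identify the singular set as the finite union of great subspheres $\tspan(x_i-x_j)^{\perp}\cap\Sd$ and $\tspan(y_k-y_l)^{\perp}\cap\Sd$, argue that the optimal (sorting) permutation is constant on each connected component of the complement, conclude that $f$ agrees with a fixed quadratic form on each cell, and handle repeated points by noting the resulting ambiguity does not change the function. The only cosmetic differences are that the paper writes $f$ as a minimum of the finitely many quadratics $f_{\sigma}$ and studies this minimum on $\R^d$ before restricting to $\Sd$, and that it re-derives the Lipschitz property from the boundedness of the piecewise gradients on the compact sphere rather than citing the preceding proposition as you do.
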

\begin{proof}
For discrete measures  $\mu =
\frac{1}{N}\sum\limits_{k=1}^N \delta_{x_k}$ and  $\nu =
\frac{1}{N}\sum\limits_{k=1}^N \delta_{y_k}$ on $\Rd$, $f$ can be
  rewritten as 
  \begin{equation}
f(\theta) = \min_{\sigma \in \Sigma_N} f_{\sigma}(\theta), \text{
    where } f_{\sigma}(\theta) = \sum_{k=1}^N \langle
  x_k-y_{\sigma(k)} | \theta \rangle ^2,\label{eq:f_as_min}
\end{equation}
where $\Sigma_N$ is the set of permutations of $\llbracket 1,N\rrbracket$.
We assume that the $\{x_i\}$ (resp. $\{y_j\}$) are all distinct. 
In the following, we study the regularity of $f$ as a function of
$\R^d$ and deduce the regularity properties of its restriction
$f_{|\Sd}$. 
Observe that each $f_{\sigma}$ defines a quadratic function on $\R^d$
and $f$, as a
minimum of a finite number of such functions, is continuous 
and also piecewise  $\mathcal{C}^{\infty}$ on $\R^d$. Since $f$ is
continuous on $\R^d$, its restriction to $\Sd$ is also continuous. To show that this restriction to $\Sd$ is also in
$\mathcal{C}_{pw}^{\infty}$, it is enough to observe that the set of
points of $\R^d$ where $f$ is not differentiable is included in the
finite union of hyperplanes
$\left(\cup_{i,j} \tspan(x_i - x_j)^{\perp} \right)\bigcup \left(\cup_{k,l}\tspan(y_k -
y_l)^{\perp}\right)$, since these {hyperplanes} are the locations where the
minimum in~\autoref{eq:f_as_min} jumps from a permutation $\sigma$ to
another one (see \autoref{fig:hyperplanes} as an illustration of those hyperplanes). Each of these
hyperplanes intersect $\Sd$ on a great circle, and we call $\mathcal{U}$ 
the sphere minus this finite union of great circles. The open set
$\mathcal{U}$ (which is dense in $\Sd$) can be written as the union $\bigcup_{k=1}^p V_k$ of a finite number of
connected open sets $V_l$, such that on each $V_l$, the permutation
$\sigma$ which attains the minimum in~\autoref{eq:f_as_min}  is
constant and unambiguous. We write this permutation $\sigma_l$. On
each $V_l$, $f_{|\Sd} = {f_{\sigma_l}}$, thus is  
$\mathcal{C}^{\infty}$ on $V_l$ and its derivative can be
obtained as the projection of  $\nabla f_{\sigma_l}$ on the
hypersphere. For $\theta \in \mathcal{U}$, writing $\sigma_{\theta}$
the permutation which attains the minimum in~\autoref{eq:f_as_min}  for
the direction
$\theta$, this derivative can be written
{\begin{equation}
  \nabla_{(d-1)} f (\theta) = 2 \left( \sum_{k = 1}^{N} \left( \langle x_{k} -
  y_{\sigma_\theta(k)} | \theta \rangle(x_{k} - y_{\sigma_\theta(k)} ) 
  -     \langle x_{k} -
  y_{\sigma_\theta(k)} | \theta \rangle ^2 \theta\right)\right).
  \label{eq:fderivative}
\end{equation}}
Since these derivatives are upper bounded on the compact set $\Sd$, it
follows that $f$ is also Lipschitz on  $\Sd$.\\
In the case where several $x_i$ (or $y_j$)  are equal,
several of the functions $f_\sigma$ coincide.  For instance, if $x_1 =
x_2$, the values of $\sigma(1)$ and $\sigma(2)$ can be exchanged
without modifying $f_\sigma$. By eliminating all the redundant
functions, we can make the same reasoning as before to show the same
regularity results on $f$.  In this case, all the pairs $(x_i,x_j)$
with $x_i = x_j$ should be
removed when constructing the set of great circles dividing the hypersphere.
\end{proof}

\begin{figure}[!h]
\begin{center}
\includegraphics[scale=0.45]{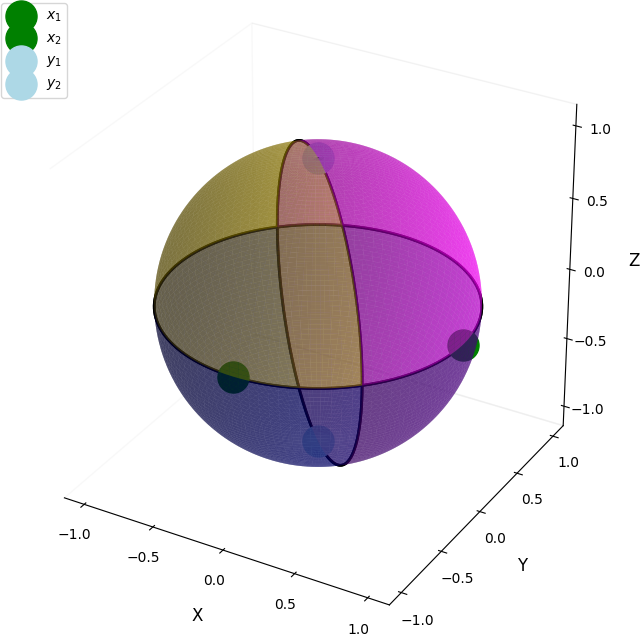}
\caption{Illustration of the open subsets $\bigcup_{k=1}^p V_k$ and their intersection with the hyperplanes  $\left(\cup_{i,j} \tspan(x_i - x_j)^{\perp} \right)\bigcup \left(\cup_{k,l}\tspan(y_k -
y_l)^{\perp}\right)$, in the specific case of two measures made of two diracs 
{$\mu = \displaystyle\frac{1}{2} \sum\limits_{i = 1}^2 \delta_{x_i}$ with $ 
x_1,x_2 = (1,0,0)^T, (0,-1,0)^T$ and $\nu = 
\displaystyle\frac{1}{2}\sum\limits_{i = 1}^2 \delta_{y_i} $ with $y_1,y_2 = 
(0,0,1)^T,(0,0,-1)^T$}. The hyperplanes divide the sphere into the colored 
sections where $\sigtheta$ and $\tautheta$ are constant. 
}
\label{fig:hyperplanes}
\end{center}
\end{figure}

\noindent The following proposition will also be useful in the next sections.
\begin{Prop} \label{prop:Sobol}
  $f \in H^{1}(\Sd)$, where, for $\alpha\in\N$, the Sobolev space $H^{\alpha}(\Sd)$ is defined as~\citep{Hebey1996} 
  \[H^{\alpha}(\Sd) = \{ h\in L^2(\Sd)\ |\ \partial^{|j|}h\in L^2(\Sd), 0 \leq |j| \leq \alpha\},\]
with $j$ a multi-index and $\partial^{|j|}$ the partial mixed derivative of order $|j|$ on $\Sd$.
  \end{Prop}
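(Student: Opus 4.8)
The plan is to deduce $H^1$ membership directly from the Lipschitz regularity already established, exploiting that $\Sd$ is a compact Riemannian manifold of finite volume. Recall $H^1(\Sd) = W^{1,2}(\Sd)$ consists of the $L^2$ functions whose weak (distributional) gradient also lies in $L^2$. Since any Lipschitz function on a compact set is bounded and continuous, and $\sd(\Sd) < \infty$, the function $f$ belongs to $L^\infty(\Sd)\subseteq L^2(\Sd)$; so the only real work is to control its first-order weak derivatives.

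First I would invoke Rademacher's theorem (in local charts, or in its Riemannian form): because $f$ is $L$-Lipschitz on $\Sd$ by Proposition 1, it is differentiable $\sd$-almost everywhere, and at each point of differentiability the tangential gradient satisfies $\|\nabla_{(d-1)} f(\theta)\| \le L$. The pointwise gradient so defined is a bounded measurable vector field on $\Sd$, hence lies in $L^\infty(\Sd)\subseteq L^2(\Sd)$.

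The key step is to verify that this pointwise a.e. gradient is genuinely the \emph{weak} gradient, i.e. that $f\in W^{1,\infty}(\Sd)$ rather than merely being a.e. differentiable. I would argue locally: in any smooth coordinate chart, $f$ becomes a Lipschitz function on an open subset of $\R^{d-1}$, and the standard Euclidean identification $W^{1,\infty}_{\mathrm{loc}} = \mathrm{Lip}_{\mathrm{loc}}$ (via absolute continuity along almost every line and integration by parts against test functions) shows that its classical a.e. derivative coincides there with its distributional derivative. Patching these identities with a partition of unity subordinate to a finite atlas yields a globally defined weak gradient on $\Sd$ equal a.e. to $\nabla_{(d-1)} f$. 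Combined with the bound above, this gives $f\in W^{1,\infty}(\Sd)\subseteq H^1(\Sd)$, since finite volume forces $L^\infty\subseteq L^2$.

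I expect the main obstacle to be exactly this local-to-global identification of the weak derivative: one must ensure the chart transition maps are smooth (they are, on $\Sd$) so that both the Lipschitz property and the integration-by-parts formula transfer cleanly, and that the overlap contributions from the partition of unity cancel. In the discrete case this abstract step can be bypassed entirely: by Proposition 2, $f$ is $\mathcal{C}^{\infty}_{pw}$ with non-differentiability set contained in a finite union of great circles, a set of $\Haus$-measure zero and codimension one across which $f$ remains continuous; integrating by parts on each smooth piece $V_l$ against a test function, the interface terms cancel by continuity of $f$, directly exhibiting $\nabla_{(d-1)} f$ from \autoref{eq:fderivative} as the weak gradient and confirming $f\in H^1(\Sd)$.
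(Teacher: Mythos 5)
Your proof is correct, and its two halves relate differently to the paper's own argument. The paper's proof is a two-line remark that leans entirely on the discrete-measure structure from Proposition 2: $f$ is continuous and piecewise quadratic, so its weak derivative is piecewise linear with discontinuities on a finite union of great circles, hence bounded and in $L^2(\Sd)$. The identification of this piecewise classical derivative with the \emph{weak} derivative is left implicit there; your final paragraph (integration by parts on each smooth piece $V_l$, with interface terms cancelling by continuity of $f$ across the great circles) is exactly the justification the paper omits, so that half of your proposal coincides with the paper's route but is more rigorous. Your primary route, however, is genuinely different: it invokes only the Lipschitz bound of Proposition 1, Rademacher's theorem, and the identification of Lipschitz functions with $W^{1,\infty}$ on a compact Riemannian manifold (via charts and a partition of unity), then uses finite volume to conclude $W^{1,\infty}(\Sd)\subseteq H^{1}(\Sd)$. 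This buys real generality: it establishes the proposition for \emph{arbitrary} measures with finite second moments, whereas the paper's proof, as written, only covers finite discrete measures --- even though the statement of the proposition itself carries no such restriction, so your route arguably closes a small gap. What the paper's route buys in exchange is explicitness and brevity: given Proposition 2, it exhibits the weak gradient concretely (the piecewise expression of \autoref{eq:fderivative}) at essentially no cost, without appealing to Rademacher or to the chart-by-chart identification $\mathrm{Lip}=W^{1,\infty}$ whose local-to-global patching you correctly flag as the delicate step.
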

  \begin{proof}
  We have seen previously that $f$ is continuous and piecewise $\mathcal{C}^{\infty}$, piecewise quadratic to be more precise. Thus its weak derivative is piecewise linear with discontinuities on a finite union of hyperplanes, which is $L^2$.
  \end{proof}

\section{Reminders on sampling strategies on the sphere and their theoretical guarantees}\label{sec:Sampling}
{In this section, we present the different 
sampling methods for numerical integration on $\Sd$ considered in this paper, 
before comparing them experimentally in \autoref{sec:Exp}. This paper addresses 
three main types of sampling: random sampling, discrepancy-based sampling, and a 
control variate approach. The first type includes the classical Monte Carlo 
(M.C.) method (\citep{hammersley1964monte}, \citep{lemieux2009monte}) on the 
sphere and its variant called orthonormal sampling 
\citep{rowland2019orthogonal}. The second one relies on a concept called the 
discrepancy (\citep{lemieux2009monte},\citep{dick2010digital}) of a point set, 
which represents the number of points in a unit of volume, and can be divided 
into two categories: low-discrepancy sequences (or digital nets) and point sets 
(or lattices). {Among the former category, we also investigate a method based on a spherical sliced-Wasserstein type discrepancy~\citep{bonet2023sphericalslicedwasserstein}.} The last type details a control variates method 
~\citep{lemieux2009monte} using spherical harmonics ~\citep{Muller1998} for this 
purpose ~\citep{leluc2024slicedwassersteinestimationsphericalharmonics}. We will 
also determine which method, and under which conditions, is theoretically 
suitable based on the regularity properties established in 
\autoref{sec:regularityProp}. \autoref{tab:summarySamplingMethods} presents a 
taxonomy of all the sampling methods explored in this paper. It details which 
method's convergence rate result is \textbf{independent from the dimension} 
({i.e.} the dimension does not appear in the asymptotic rate), which one 
can be \textbf{computed independently} 
({i.e.} each sample can be generated 
independently from the others), and which one can be \textbf{computed and 
stored} in advance.} 

\begin{table}[h!]
\resizebox{\linewidth}{!}{
\begin{tabular}{|c|c|c|c|c|}
\hline
Sampling types &  
{Dimension independence}
& Independent computation & 
{Possible pre-computation}
\\
\hline
\textbf{Random Sampling} &  &   & \\
\hline
Uniform Sampling & x & x & x  \\
Orthonormal Sampling & x & x  & x \\
\hline
\textbf{Based on discrepancy} & & & \\
\hline
Riesz Point Set / Riesz Point Set Randomized &  &  & x \\
Fibonacci Point Set / Fibonacci Point Set Randomized &  &  & x \\
Sobol / Sobol Randomized mapped on $\Sd$ &  & x & x \\
Halton / Halton Randomized on $\Sd $ &  & x & x \\
{Spherical Sliced Wasserstein Discrepancy} & {x} & & {x}\\ 
\hline
\textbf{Control variates} & & & \\
\hline
Spherical Harmonics Control Variates &  &  & \\
\hline
\end{tabular}}
\caption{{Taxonomy of the three types of sampling methods 
{investigated in this paper.}
}}
\label{tab:summarySamplingMethods}
\end{table}

{\autoref{tab:summaryConvergenceAndComplexity}  
gives a summary of the convergence rate and computational complexity of each sampling method explored in this paper. In this table $n_M = o\bigg(M^{1/\big(2(d-1)\big)}\bigg)$.}

{
\begin{table}[h!]
\resizebox{\linewidth}{!}{
\begin{tabular}{|c|c|c|c|c|}
\hline
{Sampling types} & 
{Theoretical convergence rate}
& {Time complexity} & {Space complexity}
\\
\hline
{\textbf{Random Sampling}} &  &  & \\
\hline
{Uniform Sampling} & {$\mathcal{O}(1/\sqrt{M})$} & {$\mathcal{O}(M)$}& {$\mathcal{O}(M)$}\\
{Orthonormal Sampling} & {None} & {$\mathcal{O}(M)$}& {$\mathcal{O}(M)$}\\
\hline
{\textbf{Based on discrepancy}} & & &\\
\hline
{Riesz Point Set / Riesz Point Set Randomized} & {$1/M$ on $\mathbb{S}^1$, Not applicable otherwise}  & {$\mathcal{O}(M^2)$} & {$\mathcal{O}(M)$} \\
{Fibonacci Point Set / Fibonacci Point Set Randomized} & {Not applicable} &  {$\mathcal{O}(M)$} & {$\mathcal{O}(M)$}\\
{Sobol / Sobol Randomized mapped on $\Sd$} & {None} & {$\mathcal{O}\big(M \text{log}_b^2(M)\big)$} & {$\mathcal{O}(M)$} \\
{Halton / Halton Randomized on $\Sd $} & {None} & {$\mathcal{O}\big(M \text{log}_b^2(M)\big)$} & {$\mathcal{O}(M)$}\\
{Spherical Sliced Wasserstein Discrepancy} & {None} & {$\mathcal{O}(M\text{log}(M))$} & {$\mathcal{O}(M)$}\\
\hline
{\textbf{Control variates}} & & &\\
\hline
{Spherical Harmonics Control Variates} & {$\mathcal{O}\big(1/(n_M \sqrt{M})\big)$} & {$\mathcal{O}(M)$} & {$\mathcal{O}(M)$}\\
\hline
\end{tabular}}
\caption{{Convergence rate, time complexity and spacial complexity (w.r.t the sampling number) summary of the sampling methods 
studied in this paper.
}}
\label{tab:summaryConvergenceAndComplexity}
\end{table}
}

\subsection{Random samplings} \label{sec:Rand}

{We first explore} classical strategies for randomly generating points on the sphere: uniform sampling \citep{hammersley1964monte} and orthonormal sampling \citep{rowland2019orthogonal}. These strategies are the most commonly used for estimating $\SW$, and their convergence rates do not depend on the dimension of the input measures.

\subsubsection{Classical Monte Carlo} \label{sec:unif}

{The classical Monte Carlo method uses uniform random sampling to generate the projection angles. 
For $(\theta_M)_{M\in\mathbb{N}^*}$ i.i.d. samples of $\sd$~\footnote{In practice, to simulate a random variable $\theta\sim\sd$, one takes a normal random variable $Z\sim\mathcal{N}(0, I_d)\neq 0$ and chooses $\theta = \frac{Z}{\| Z \|} \sim \sd$ \citep{asmussen2007stochastic}.}, we write the Monte Carlo Estimator 
\begin{equation}
    X_M :=\displaystyle\frac{1}{M}\sum\limits_{i = 1}^{M} f(\theta_i) \text{ with } M\in\mathbb{N}^* . 
    \label{eq:mcestimator}
\end{equation}
The law of large numbers ensures that $X_M$ converges a.s. to $\SW(\mu,\nu) = \mathbb{E}_{\theta \sim \sd}[f(\theta)]$ as $M$ goes to infinity. Moreover, the rate of convergence for this unbiased estimator  is given by 
\begin{equation}\label{eq:convUnif}
    \sqrt{\mathbb{V}[X_M]} = \sqrt{\frac {\mathbb{V}[X_1]}{M}} = \frac {{\sigma}}{\sqrt{M}},
    \end{equation}
where $\sigma^2 = \mathbb{V}[f(\theta)] = \displaystyle\int_{\Sd} f^2(\theta) ds_{d-1}(\theta) - \SWsq(\mu,\nu) < +\infty$.
{This convergence rate in \autoref{eq:convUnif} does not depend on the dimension of the input measures}. 
In order to derive confidence intervals for $\SW(\mu,\nu)$, we can rely on the Central Limit Theorem~\citep{fischer2010history} , which states that 
\begin{equation*}\label{eq:CLTSW}
    \sqrt{M}\frac{[X_M - \SW(\mu,\nu)]}{\mathbb{\sigma}} \xrightarrow[M\rightarrow +\infty]{\mathcal{L}} \mathcal{N}(0,1),
\end{equation*}
This allows us to compute confidence intervals for $\SW(\mu,\nu)$ by using the quantiles of the standard normal distribution.}
\noindent {This means that} for $M$ large enough,  $\mathbb{P}\left(X_M - \SW(\mu,\nu) \in \left[  -\frac{\sigma q_{1 - \alpha/2}}{\sqrt{M}} , \frac{\sigma q_{1 - \alpha/2}}{\sqrt{M}}\right]\right) {\xrightarrow[M\rightarrow +\infty]{}} 1 - \alpha$, with $\alpha$ in $[0,1]$  and $q_{1 - \alpha/2}$ the quantile {of level $1 - \alpha /2$} of $\mathcal{N}(0,1)$. 
One strategy for choosing $M$ is taking $M$ such that $\frac{\sigma q_{1 - \alpha/2}}{\sqrt{M}} \leq \varepsilon$ with $\varepsilon\geq 0$ a chosen {precision}.  The value of $\sigma$ being unknown, a possibility is to plug a consistent  estimator of $\sigma^2$, such as
$$\hat{\std}_M^2 = \frac{1}{M}\left[\displaystyle\sum\limits_{i = 1}^M f(\theta_i)^2 - X_M^2\right].$$
\citet{xu2022central} provide an alternative criteria for choosing $M$, however it is quite impractical as it {requires} to compute the Wasserstein distance between $\mu$ and $\nu$.

\subsubsection{Orthonormal sampling} \label{ortho}

{A variant of the uniform sampling covered in \autoref{sec:unif} was introduced by \citep{rowland2019orthogonal}, which presents a simple variant for the previous Monte Carlo estimator $X_M$ by sampling random orthonormal bases. This method is inspired by variance reduction techniques known as stratification \citep{lemieux2009monte}.
Let $O(d)$ be the orthogonal group in $\R^d$. For $(\Theta_P)_{P\in \N^*} \sim \mathcal{U}\big(O(d)\big)$, denoting $\theta_1,\hdots,\theta_M$ all the columns of the matrices $\Theta_1,\hdots,\Theta_K$, we define $Y_M =\displaystyle\frac{1}{M}\sum\limits_{i = 1}^{M} f(\theta_i)$. It is easy to show that each $\theta_i$ follows the uniform distribution on $\Sd$~\citep{rowland2019orthogonal}. As a consequence, the estimator $Y_M$ is still unbiased. Although it is not possible to show that $Y_M$ has a smaller variance than $X_M$ in general, this estimator is most of the time more efficient than $X_M$ in our experiments and show an equivalent or better rate of convergence in practice. This might be due to the fact that the diversity of the samples is increased by the orthonormality constraint.}

\begin{Rk}
 Other fully random point processes on $[0,1]^2$ or $\mathbb{S}^2$ suitable for Monte Carlo integration are studied in the literature. Among them, we can mention Determinantal Point Process (DPP). Recent works, such as \citep{feng2023determinantalpointprocessesspheres}, have proposed DPP methods directly on the sphere $\mathbb{S}^2$. Unfortunately, due to the lack of publicly available implementations, we could not experiment efficiently with these methods.
\end{Rk}

\subsection{Sampling strategies based on discrepancy}

We examine in this section two different types of deterministic sampling based on discrepancy: low-discrepancy sequences (digital nets) and low-discrepancy point sets (lattices). They were developed to replace random sampling, expecting to have a better convergence rate than the classical Monte Carlo method. 

\subsubsection{Low-discrepancy sequences} \label{sub:lds}

Quasi-random sequences, better known as low-discrepancy sequences (L.D.S.), are 
sequences mimicking the behavior of random sequences while being entirely 
deterministic. {To date, these sequences are only defined on the unit 
{hypercube}
$[0,1]^d$. We introduce below a first definition of discrepancy 
(\citep{lemieux2009monte},\citep{dick2010digital})}.
\begin{D}
The discrepancy of a set of points $P = \{u_1, \hdots, u_M\}$ in {$[0,1]^d$} is defined as
$$D_M(P) = \sup_{I\in \mathcal{I}} \Bigg| \frac{| P \cap I |}{M} - \lambdim(I)\Bigg|,$$
where  $|A|$ denotes the cardinal of a set $A$, $\lambdim$ is the $d$-dimensional Lebesgue measure and
 $\mathcal{I} = \{\prod\limits_{i = 1}^d [a_i,b_i[\ |\ 0 \leq a_i < b_i \leq 1\}$. The star-discrepancy $D_M^*(P)$ is defined the same way with $\mathcal{I}^{*} = \{\prod\limits_{i = 1}^d [0,b_i[ \ |\ 0 \leq  b_i \leq 1\}$.
\end{D}

We can now provide a definition of a Low discrepancy sequence (L.D.S.).
\begin{D}
{Let $(u_m)_{m\in\N^*}$ be a sequence in $[0,1]^d$. Denoting $P_M = \{u_1, \hdots, u_M\}$ for any $M\in\N^*$, $u$ is a L.D.S. if
$$D_M^*(P_M) \xrightarrow[M \to +\infty]{} 0.$$}
\end{D}

{The notion of discrepancy is important because it is related to the error made when approximating an integral on the hypercube by its Monte Carlo approximation.} {This relation is made explicit by the Koksma-Hlawka inequality (\citep{lemieux2009monte}; \citep{dick2010digital}; \citep{brandolini2013koksma}).}

This inequality requires to introduce the notion of Hardy-Krause variation $V_h$ of a function $h$ on $[0,1]^d$ \citep{aistleitner2016functions}, which is out of the scope of this paper, but can be broadly understood as a measure of the oscillation of $h$ on the unit cube $[0,1]^d$.
\begin{Prop}[Koksma-Hlawka inequality]
Let {$h:[0,1]^d \rightarrow \mathbb{R}$} have bounded variation $V_h$ on {$[0,1]^d$} in the sense of Hardy-Krause \citep{aistleitner2016functions}. Then for $\{u_1,\hdots,u_M\}$ a point set in {$[0,1]^d$}, we have
\begin{equation}\label{eq:KHIneq}
\Bigg |\frac{1}{M}\displaystyle\sum\limits_{k=1}^M h(u_k) - \displaystyle\int_{S} h(x)d\lambdim(x) \Bigg | \leq V_h D_M^*(u_1,\hdots,u_M).
\end{equation}
\end{Prop}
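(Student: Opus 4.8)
The plan is to treat this as the classical quasi-Monte Carlo estimate and reconstruct its proof through the \emph{Zaremba identity}, which rewrites the integration error as an integral of the \emph{local discrepancy function} against the signed measure induced by $h$. For the point set $\{u_1,\dots,u_M\}$ I would first introduce, for $x=(x_1,\dots,x_d)\in[0,1]^d$, the local discrepancy
$$
\Delta(x) = \frac{1}{M}\sum_{k=1}^M \mathbb{1}_{[0,x)}(u_k) - \prod_{i=1}^d x_i,
\qquad [0,x) = \prod_{i=1}^d [0,x_i),
$$
so that $\Delta(1,\dots,1)=0$ and, by the very definition of the star-discrepancy over $\mathcal{I}^{*}$, one has $\sup_{x}|\Delta(x)| = D_M^*(u_1,\dots,u_M)$. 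The whole argument then hinges on expressing the left-hand side of~\autoref{eq:KHIneq} purely in terms of $\Delta$.

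To build intuition I would treat the one-dimensional case first: for $h$ of bounded variation on $[0,1]$, a single Riemann--Stieltjes integration by parts gives
$$
\frac{1}{M}\sum_{k=1}^M h(u_k) - \int_0^1 h(x)\,dx = -\int_0^1 \Delta(x)\,dh(x),
$$
and bounding $|\Delta|\le D_M^*$ together with $\int_0^1 |dh| = V_h$ yields the claim. In higher dimension the same mechanism applies coordinate by coordinate, but each integration by parts produces a boundary contribution at the face $x_i=1$, and iterating over the $d$ coordinates generates a sum over all nonempty subsets $J\subseteq\{1,\dots,d\}$. The resulting identity reads
$$
\frac{1}{M}\sum_{k=1}^M h(u_k) - \int_{[0,1]^d} h\,d\lambdim
= \sum_{\emptyset\neq J\subseteq\{1,\dots,d\}} (-1)^{|J|} \int_{[0,1]^{|J|}} \Delta_J(x_J)\, d_J h(x_J,\mathbf{1}),
$$
where $\Delta_J$ is the local discrepancy with the coordinates outside $J$ frozen at $1$, and $d_J h(x_J,\mathbf{1})$ is the Stieltjes measure of the mixed difference of $h$ in the coordinates of $J$ with the remaining ones set to $1$.

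The final step is a pure bounding argument: taking absolute values and using $|\Delta_J|\le \sup_x|\Delta(x)| = D_M^*$ uniformly over every subface gives
$$
\left|\frac{1}{M}\sum_{k=1}^M h(u_k) - \int_{[0,1]^d} h\,d\lambdim\right|
\le D_M^* \sum_{\emptyset\neq J} \int_{[0,1]^{|J|}} |d_J h(x_J,\mathbf{1})|,
$$
and by definition the sum of these Vitali variations over the upper faces is exactly the Hardy--Krause variation $V_h$. I expect the genuine obstacle to be the rigorous justification of the multidimensional integration by parts when the integrand is merely of bounded variation: one must interpret the $d_J h$ as finite signed measures, keep careful track of the boundary terms generated at each coordinate, and verify that the subset sum of Vitali variations coincides with $V_h$. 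These are precisely the technical points developed in~\citep{aistleitner2016functions}, which is why I would ultimately quote the statement rather than reproduce this machinery in full.
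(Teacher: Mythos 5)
Your proposal is correct and matches the paper's treatment in spirit: the paper does not actually prove this proposition but defers to \citep{kuipers2012uniform} and \citep{dick2010digital}, and the argument in those references is precisely the classical Hlawka--Zaremba decomposition you outline (local discrepancy, one-dimensional Koksma via Riemann--Stieltjes integration by parts, iteration over coordinate subsets with faces anchored at $1$, then the uniform bound $|\Delta_J|\leq D_M^*$ and identification of the subset sum of Vitali variations with the Hardy--Krause variation $V_h$). Your closing decision to quote the technical measure-theoretic details from \citep{aistleitner2016functions} rather than reproduce them is exactly the choice the paper makes.
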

\noindent The proof of this inequality and basic results on discrepancy theory can be found in\citep{kuipers2012uniform} and \citep{dick2010digital}. \autoref{eq:KHIneq} shows that the absolute error made {by the Monte Carlo approximation} is {upper} bounded {by a term depending only on $h$} and the star discrepancy. Compared to the Central Limit Theoreom, this inequality is not probabilistic and not asymptotic, the bound being valid for every $M \in \mathbb{N}^*$.  An important limitation is the term $V_h$, which is impractical to compute directly. When $d = 1$, this term is exactly the total variation of $h$, but in general, it is only {upper} bounded by the total variation. 
In the case of our function $f$ involved in the estimation of $SW$, $V_f < +\infty$ holds since $f$  is Lipschitz continuous. Another limitation of the previous bound is that the rate of convergence of the star discrepancy $D_M^*$ of a sequence is most of the time not explicit and difficult to compute~\citep{owen2005multidimensional}. 

Nevertheless, this proposition ensures that if the rate of convergence of the star discrepancy of a sequence is better than $O(\frac 1 {\sqrt{M}})$, for $M$ large enough the approximation of the quasi Monte Carlo approximation using this sequence will outperform the one of classical Monte Carlo. 

{In the following, we present two L.D.S. defined on the unit square $[0,1]^d$, and see how their star discrepancy decreases with $M$. We then focus on practical methods to map these sequences from the hypercube to the hypersphere $\Sd$.}

\paragraph{Halton sequence} \label{par:Halton}\mbox{} \\

The Halton sequence $(u_i)_{i\in\N} \in (\Rd)^{\N}$ \citep{halton1964algorithm} is a generalization of the von der Corput sequence \citep{vdc1935}. In the following, we write, for any integer $i$, $c_l(i)$ the coefficients from the expansion of $i$ in base $b$, and we define the radical-inverse function in base $b$ as 
$$
\phi_b(i) = \displaystyle\sum\limits_{l = 0}^{+\infty} c_l(i)b^{{-}l-1},\forall i \in \mathbb{N}.
$$ 
The Halton sequence in dimension $d$ is then defined as
$$
u_i = (\phi_{b_1}(i),\hdots,\phi_{b_d}(i))^T,
$$
where $b_i$ is chosen as the i-th prime number.

\paragraph{Sobol sequence} \label{par:Sobol}\mbox{} \\

This sequence uses the base $b = 2$. To generate the $j$-th coordinate of the $i$-th point $u_i$ in a Sobol sequence \citep{sobol1967distribution}, one needs a primitive polynomial of degree $n_j$ in $\Zq[X]$,
$$
X^{n_j} + a_{1,j}X^{n_j - 1} + a_{2,j}X^{n_j - 2} + \hdots + a_{n_j-1,j}X + 1.
$$
{This polynomial is used to define} a sequence of positive integers $(m_{k,j})$ by recurrence, with $+_{\Zq}$ the inner law of $\Zq$:
$$
m_{k,j} = 2a_{1,j}m_{k-1,j}+_{\Zq} 2^2a_{2,j}m_{k-2,j} +_{\Zq} \hdots +_{\Zq} 2^{n_j}m_{k-n_j,j} +_{\Zq} m_{k-n_j,j}.
$$
The values $m_{k,j}$, {for $1\leq k \leq n_j$,} can be chosen chosen arbitrarily provided that each is odd and less than $2^k$. Then one {generates} what is called direction numbers:
$$v_{k,j} = \frac{m_{k,j}}{2^k}.$$
The $j$-th coordinate of $u_i$ is then obtained as
$$
u_{i,j} = \displaystyle\sum\limits_{k = 1}^{+\infty} c_k(i) v_{k,j}.
$$

\paragraph{Convergence rate of Halton and Sobol sequences} \label{par:convHaltonSobol}

Both sequences (Halton and Sobol) have a star discrepancy which converges to $0$ (which means that they are indeed L.D.S.). The convergence rate is given by the following property~\citep{niederreiter1988low}~\citep{owen2019monte}.
\begin{Prop}
Let $(u_m)_{m\in\mathbb{N}^*}$ be either the Halton sequence or Sobol sequence in $[0,1]^d$. Then for $M\geq 1$, we have
$$D_M^{*}(u_1,\hdots,u_M)\leq c_d \frac{log(M)^d}{M}$$
where $c_d$ is a constant that depends only on the dimension.
\end{Prop}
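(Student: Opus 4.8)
The plan is to treat the Halton and Sobol sequences in a unified spirit by estimating the \emph{local discrepancy} of an anchored box $[0,x_1)\times\cdots\times[0,x_d)$ and then taking the supremum over $x$ to recover $D_M^*$; the technical machinery differs between the two constructions, but both ultimately exploit the base-$b$ digit structure that defines them.

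For the Halton sequence I would first settle the one-dimensional case. For the van der Corput sequence $(\phi_b(i))_i$ in a fixed base $b$, the key observation is that among any block of $b^k$ consecutive indices the values $\phi_b(i)$ place exactly one point in each of the $b^k$ elementary intervals $[m b^{-k},(m+1)b^{-k})$. Writing $M$ in base $b$ and decomposing $\{0,\dots,M-1\}$ into such blocks, one obtains $M\,D_M^*(\phi_b)\leq \frac{b-1}{2\log b}\log M + O(1)$, i.e. a one-dimensional star discrepancy of order $\log M/M$. For the full $d$-dimensional sequence I would then use that the bases $b_1,\dots,b_d$ are distinct primes, hence pairwise coprime: this coprimality feeds a Chinese-remainder-theorem argument on the joint digit expansions, so that the local discrepancy of an anchored box factorizes, up to lower-order terms, across coordinates. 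Combining the one-dimensional estimates coordinate by coordinate yields a bound whose leading term is $c_d(\log M)^d$ with $c_d=\prod_{j=1}^{d}\frac{b_j-1}{2\log b_j}$ depending only on $d$ through the first $d$ primes.

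For the Sobol sequence I would instead invoke Niederreiter's theory of digital $(t,s)$-sequences (with $s=d$). The crucial structural fact is that the Sobol construction, which uses primitive polynomials in $\Zq[X]$ to generate the direction numbers $v_{k,j}$, produces a digital $(t,d)$-sequence in base $b=2$, where the quality parameter $t=t(d)$ is controlled by the degrees of the chosen polynomials. Verifying this net property, namely that every elementary dyadic box of the appropriate volume receives its fair share of points, reduces to a rank statement over $\Zq$ for the matrices assembled from the direction numbers. Granting it, the general discrepancy bound for $(t,d)$-sequences gives $D_M^*\leq C(t,d)\,\frac{(\log M)^d}{M}+O\big(\frac{(\log M)^{d-1}}{M}\big)$, and absorbing the constant and the lower-order term into $c_d$ yields the claim.

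The main obstacle is establishing the net property of the Sobol sequence (equivalently, controlling $t$) from the primitive-polynomial/direction-number construction, and, on the Halton side, carrying out the coprimality bookkeeping that makes the multidimensional local discrepancy factor into a product of one-dimensional contributions. Both are classical but genuinely nontrivial; once they are in place, the passage to the stated $O\big((\log M)^d/M\big)$ rate is routine, and the precise value of $c_d$ is immaterial here since we only need that it depends on $d$ alone.
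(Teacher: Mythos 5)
The paper never actually proves this proposition: it is quoted as a known result, with the convergence rate attributed to Niederreiter (1988) and Owen (2019), so there is no internal proof to compare yours against. Your sketch is essentially a reconstruction of the classical arguments behind those citations, and its architecture is correct: the one-point-per-elementary-interval property of van der Corput blocks, the Chinese-remainder counting enabled by the pairwise coprime prime bases for Halton, and the digital $(t,d)$-sequence framework with Niederreiter's general discrepancy bound for Sobol are exactly the standard ingredients. Be aware, though, that the two steps you defer as ``classical but genuinely nontrivial'' are where all of the substance lies --- the CRT bookkeeping that converts elementary-box counts into a bound uniform over all anchored boxes, and the rank condition on the Sobol generating matrices that keeps the quality parameter $t$ finite --- so what you have is a legitimate proof plan that delegates the core lemmas to the literature rather than an independent proof; that is defensible here precisely because the paper itself only cites the result. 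Two smaller points: the phrase ``the local discrepancy factorizes across coordinates'' is loose (what factorizes, via CRT, is the point count of elementary boxes, and the anchored-box bound then follows by summing over a decomposition), and your absorption of lower-order terms into $c_d$ only works for $M \geq 2$ --- at $M = 1$ the right-hand side vanishes while $D_1^* > 0$, an imprecision inherited from the paper's statement rather than introduced by your argument.
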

Thanks to~\autoref{eq:KHIneq}, {for any function $h$ such that $V_h<+\infty$ (which is the case for our function $f$)}, this implies  a convergence rate of the Monte Carlo estimator using these sequences in $\mathcal{O}(\frac{log(M)^d}{M})$, which means $\mathcal{O}(M^{-1+\epsilon})$ for every $\epsilon >0$. This convergence rate  is better than the one of classical Monte Carlo with i.i.d. sequences, {even if the rate of convergence slows down when the dimension increases, because of the term $\log(M)^d$.} 
{
\begin{Rk}
Note that L.D.S. are designed to mimic the behavior of a random uniform sampling in $[0,1]^d$ while being completely deterministic. This deterministic behavior leads to patterns in the sampling; because of those patterns, the higher the dimension, the harder it is for those to fill the "gaps" in $[0,1]^d$.
Moreover, the term $\text{log}(M)^d$ implies that one needs $M$ to be very large (exponential) to get the same level of space coverage in high dimension than in low dimension.
\end{Rk}
}
{
\begin{Rk}
Observe that both for Sobol and Halton sequences, {generating $M$ values} has a complexity in $\mathcal{O}\big(M log_b^2(M)\big)$, where $b$ is the base (or smallest basis for Halton) chosen.
\end{Rk}
}

\paragraph{L.D.S. on the sphere} \label{par:ldsSphere}\mbox{} \\
\noindent To our knowledge, there is no true L.D.S. on the unit sphere $\Sd$ for $d \geq 3$, this question remaining an active research area. Practitioners typically map L.D.S. from the hypercube to the hypersphere, using one of the methods described below: 
\begin{itemize}\label{item:Methods}
\item[$\bullet$] \textbf{Equal area mapping} ~\citep{Aistleitner2012}: this method is only defined for mapping points in the unit square to $\mathbb{S}^2$. Denoting $(z_1,z_2)\in [0,1[^2$,  one gets a point $u = \Phi(2\pi z_1, 1-2z_2)$ on $\mathbb{S}^2$ with:
\begin{equation}\label{eq:EqArea}
\Phi(\eta, \beta) = \left( \sqrt{1 - \beta^2}\tcos(\eta), \sqrt{1 - \beta^2}\tsin(\eta), \beta\right),\ \eta,\beta\in[0,1[.
\end{equation}

\item[$\bullet$] \textbf{Spherical coordinates}~\citep{arfken2011mathematical}: This method maps the points from an L.D.S. in $[0,1]^{d-1}$ to $\Sd$ by using the spherical coordinates.  Unfortunately, we found that the resulting sampling is usually not competitive compared to other sampling methods.

\item[$\bullet$] \textbf{Normalization onto the sphere} ~\citep{basu2016}: {An L.D.S. is generated 
in the $d$-hypercube $[0,1]^d$ and mapped to $\R^d$ using the inverse cumulative 
distribution function of the standard normal distribution (separately on each 
dimension).  Then each point in the 
{resulting}
sequence is normalized by its norm to map it onto $\Sd$. }

\end{itemize}

\textbf{Specific case of $\mathbb{S}^2$.} \\
In the specific case of $\mathbb{S}^2$, it has been shown by~\citet{Aistleitner2012} that if $u$ is an L.D.S in $[0,1]^2$ and $\Phi$ the equal area mapping defined in \autoref{eq:EqArea}, the spherical cap discrepancy $D_{\mathbb{L}_2,M}\big(\Phi(P)\big)$ (see definition \autoref{eq:SphereCal} in the next section) of the mapped sequence is in $\mathcal{O}\left(\frac{1}{M^{1/2}}\right)$. However, their experiments showed that the correct order seems rather to be $\mathcal{O}\left(\frac{log^c (M)}{M^{3/4}}\right)$ for $1/2\leq c \leq 1$.

\subsubsection{Deterministic point sets on $\Sd$} \label{sec:QMCSphere}

This section details different methods to design well distributed point sets on $\Sd$. Contrary to the L.D.S. defined above,  these point sets are defined directly on the sphere, in order to be approximately uniformly distributed on $\Sd$. To measure this uniformity, we can rely on the notion of spherical cap on the sphere: a spherical cap of center $c\in\Sd$ and $t\in [-1,1]$ is defined as
\begin{equation}
C(c,t) = \{ x \in \Sd\ | \ \langle x , c \rangle > t\}.
\end{equation}\label{eq:sphereCap}

In other words, a spherical cap is the intersection of a portion of the sphere and a half-space (see \autoref{fig:sphericalcap} for an illustration). 
\begin{figure}[!h]
\begin{center}
\includegraphics[scale=0.25]{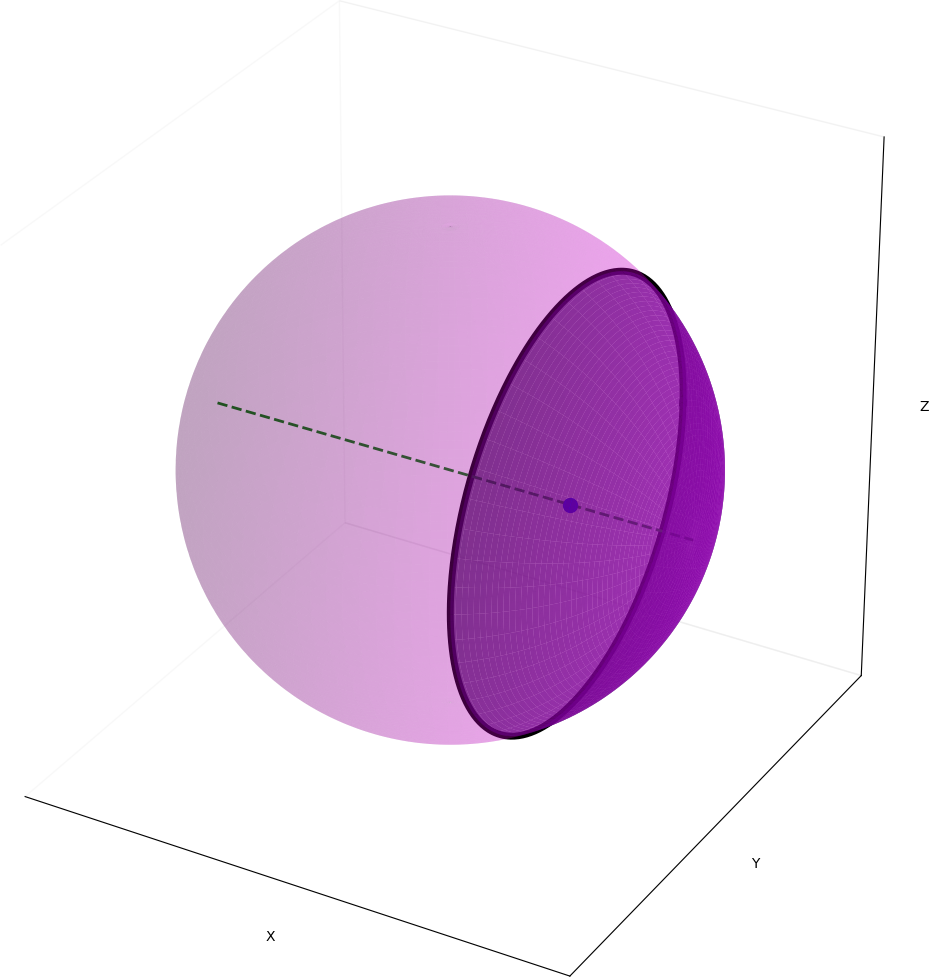}
\caption{Illustration of a spherical cap on $\mathbb{S}^2$. The circle represents the intersection of the plane $\langle x, c \rangle = t$ with the sphere, and the purple colored area is the cap $C(c,t)$ as noted in \autoref{eq:sphereCap}.}
\label{fig:sphericalcap}
\end{center}
\end{figure}

To the best of our knowledge, there is no equivalent to the Koksma-Hlawka inequality for the sphere in full generality~\citep{brauchart2011optimal}. 
A sequence of points $\{u_n\}$ on $\Sd$ is said asymptotically uniformly distributed on $\Sd$ if for every spherical cap $C$, the proportion of points inside the cap, converges to the measure of the cap $\sd(C)$. It can be shown that this assumption is equivalent to assume that for every continuous function $h$, the Monte Carlo approximation $\frac 1 M \sum_{k=1}^M h(u_k)$ converges to $\mathbb{E}_{\theta \sim \sd}[h(\theta)]$. 

In order to get a non asymptotic notion of the uniformity of a point set on $\Sd$, we can rely on different notions of spherical cap discrepancy on the sphere, defined as follows.

\begin{D}
The spherical cap max-discrepancy of a point set $P_M$ of size $M$ is defined as~\citep{marzo2021discrepancy}:
\begin{equation*}
    D_{max}(P_M) = \sup_{c\in \Sd ,t \in [-1,1]}\left\lbrace  \bigg | \frac{|P_M \cap C(c,t)|}{M} - s_{d-1}\big(C(c,t)\big)\bigg| \right\rbrace.
\end{equation*}\label{eq:Spheremax}
The spherical cap $\mathbb{L}_2$-discrepancy of a point set $P_M$ of size $M$ is defined as~\citep{brauchart2011optimal}:
\begin{equation*}
D_{\mathbb{L}_2}^2(P_M) = \left\lbrace \int_{-1}^1 \int_{\Sd} \bigg | \frac{|P_M \cap C(c,t)|}{M} - s_{d-1}\big(C(c,t)\big)\bigg|^2 ds_{d-1}(c) dt\right\rbrace ,
\end{equation*}\label{eq:SphereCal}
where $C(c,t)$ is  a spherical cap of center $c$ and height $t$.
\end{D}
Again, the idea is to compare the proportion of points in $P_M$ that fall inside a spherical cap with the measure of the cap. This comparison is done for all possible caps on the sphere, and $D_{max}$ represents the worst error over all possible caps, while $D_{\mathbb{L}_2}^2$ represents the average squared error over all possible caps.

When using 
{Q.M.C.}
on the hypersphere to approximate the integral of functions $h$, 
another notion often used in the literature is the worst-case (integration) error (W.C.E.) on a Banach space of functions, which is the largest possible error made by the method on the space. 
For instance, on $H^{\alpha}(\Sd)$,
\begin{D}
For $P_M = \{ u_1,\hdots, u_M\}$, for $\alpha \in \N$
\begin{equation*}
WCE\Big(P_M,H^{\alpha}(\Sd)\Big)= \sup_{h\in H^{\alpha}(\Sd)} \bigg |\displaystyle \frac{1}{M} \sum\limits_{m=1}^M h(u_m) - \frac{1}{s_{d-1}(\Sd)}\int_{\Sd} h(w) ds_{d-1}(w) \bigg |.
\end{equation*}
\end{D}
\noindent Under some regularity condition, a sufficient and necessary one being $\alpha \geq \frac{1}{2} + \frac{d - 1}{2}$ for $H^{\alpha}(\Sd)$, \citet{Brauchart_2013} show that optimizing the spherical cap $\mathbb{L}_2$-discrepancy is equivalent to optimizing the W.C.E. thanks to the Stolarsky's invariant principle ~\citep{stolarsky1973}.
In the case of our function $f$, we have seen that $f$ is regular enough in the specific case of $\mathbb{S}^1$, since $f\in H^{\alpha} (\mathbb{S}^1)$ with $\alpha = 1 = \frac{1}{2} + \frac{1}{2}$. However in dimension larger than $3$, this result does not hold anymore since $f$ does not belong to any Sobolev space $H^{\alpha} (\mathbb{S}^d)$ with $\alpha > 1$.

\paragraph{Fibonacci point set on $\mathbb{S}^2$}\mbox{}\\
\noindent {Denoting $\varphi$ the polar angle and $\chi$ the azimuthal angle 
forming the geographical coordinates $(\varphi, \chi)$,  we retrieve the 
Cartesian coordinates $(x,y,z)$ using the spherical coordinates 
(see~\autoref{fig:sphericalcoord} for 
{an}
illustration).} Noting $\phi = \frac{1 + \sqrt{5}}{2}$ the golden 
ratio, {the $m$-th point $u_m = (\varphi_m, \chi_m)$ of the Fibonacci} {point} set is given by
\begin{align*}
\varphi_m &= \tacos\left(\frac{2m}{2M + 1}\right),\\
\chi_m &= 2m\pi \phi^{-2}.
\end{align*}
It is a simple and  efficient way, convergence rate wise, to generate points on $\mathbb{S}^2$ for the quasi-Monte Carlo method {but it} is only defined on $\mathbb{S}^2$. The complexity of the generation is linear in $M$, and according to \citep{Marques2013}, the corresponding convergence rate for the W.C.E. and the $\mathbb{L}_2$-spherical cap discrepancy is in $\mathcal{O}(\frac{1}{M^{3/4}})$. For an extensive list of other popular point configurations on $\mathbb{S}^2$, see \citep{hardin2016comparison}.

\begin{figure}[!h]
\begin{center}
\includegraphics[scale=0.18]{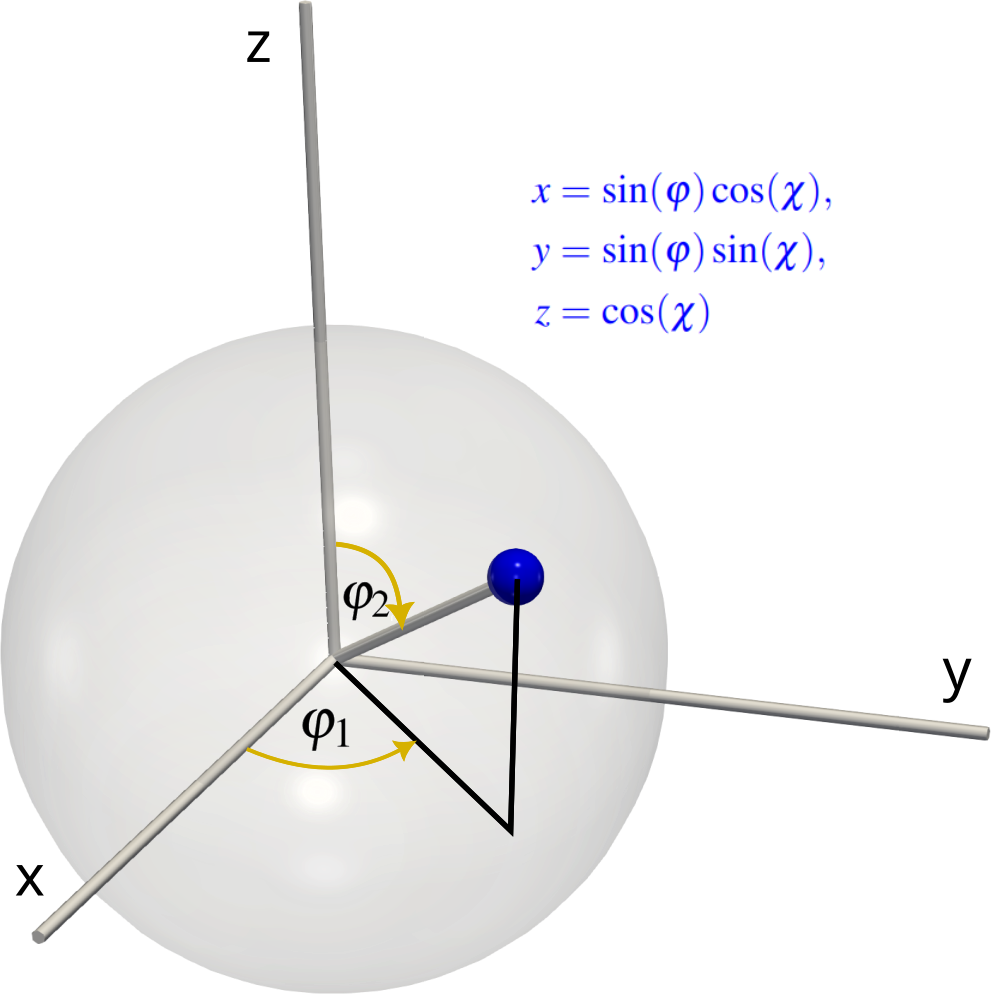}
\caption{Illustration of the spherical coordinates in $\R^3$ for points on the sphere $\mathbb{S}^2$.}
\label{fig:sphericalcoord}
\end{center}
\end{figure}

\paragraph{Equi-distributed points generated by the discrete s-Riesz energy} \label{par:Riesz} \mbox{}\\

Another classical way to define equi-distributed point sets on the hypersphere is to rely on optimization. In such methods, the point set $P_M$ is defined as the minimizer of a certain energy functional $E_s$,
\begin{equation*}
    P^*_M := \argmin_{u_1,\hdots,u_M \in \Sd} E_s(u_1,\hdots, u_M).
\end{equation*}
The most common energy functional is the s-Riesz energy, which is defined as follows. 
\begin{D}
For $s \geq 0$ and $P_M = \{ u_1, \hdots, u_M\}$ a set of points on $\Sd$, the s-Riesz energy of $P$ is defined as
\begin{equation*}
E_s(P_M) = 
\begin{cases}
\displaystyle\sum\limits_{i\neq j} \frac{1}{\| u_i - u_j\|^s} & \text{if} \ s > 0,\\
\displaystyle\sum\limits_{i\neq j} \text{log} \frac{1}{\| u_i - u_j\|} & \text{if} \ s = 0.
\end{cases}
\end{equation*}
\end{D}

The resulting point set is called a minimal $s$-energy configuration. {The s-Riesz energy can also be defined for $s < 0$, in this case the point set $P_M$ is obtained as the maximizer of $E_s= \displaystyle\sum\limits_{i\neq j} \| u_i - u_j\|^s$~\citep{brauchart2011optimal}.} Minimising $E_s$  is non trivial, the functional being not convex, and the problem becomes more complex when the dimension increases. 
Minimal energy configuration points for $E_s$ are called Fekete points and it is known that for $0\leq s < d$, these sets are asymptotically uniformly distributed with respect to the normalized surface measure $s_{d-1}$, which means that Monte Carlo estimates using the Fekete points converge to the integral against $s_{d-1}$~\citep{marzo2021discrepancy}. 

The spherical cap $\mathbb{L}_2$-discrepancy of a point configuration is minimal if and only if the sum of distances in the configuration is maximal. {This would correspond to maximizing a s-Riesz energy for $s=-1$~\citep{brauchart2011optimal}}. However, the link between the configurations of minimal $s$-Riesz energy and the max or $\mathbb{L}_2$ discrepancies of these configurations is in general not straightforward, see~\citep{brauchart2011optimal},~\citep{marzo2021discrepancy}, \citep{GOTZ200362}. For $0\leq s <d$, and $P_M$ a minimizer of size $M$ of the Riesz s-energy on $\Sd$, the authors of ~\citep{marzo2021discrepancy} show that {\[D_{max}(P_M) \lesssim O\left(\max\left(M^{-\frac{2}{d(d-s+1)}},M^{-\frac{2(d-s)}{d(d-s+4)}}\right)\right).\]  
This implies that $D_{max}(P_M) \xrightarrow[M \to +\infty]{} 0$, but the speed of convergence degrades with the dimension $d$}, which means that  the uniformity of these configurations  
{is}
 likely to suffer from the curse of dimensionality. \autoref{fig:rieszAndfig} shows an example of s-Riesz points and Fibonacci points on $\mathbb{S}^2$ with $500$ points.

\begin{Rk}Since computing Riesz point configurations involves optimization (with a non linear complexity), the time needed to generate those points can be impractical. Note that generally the generation of the $s$-Riesz configuration points has a {runtime complexity} of $\mathcal{O}(T M^2)$, where $T$ is the number of iterations of the optimization loop.
\end{Rk}

\noindent \textbf{In the specific case of $S^1$}, the Fekete points are unique up to a rotation, and are the $M$-th unit roots (see \citep{GOTZ200362} and see \autoref{fig:unityroots} for an illustration):
$$\left\lbrace e^{\frac{2ik\pi}{M}} \ | \ k = 0,\hdots,M-1 \right\rbrace.$$
This explains why for 2D discrete measures, a uniform grid on $\mathbb{S}^1$ gives better results than any other sampling method for computing $\SW$, as we will see in \autoref{sec:Exp}.

\begin{figure}[h!]
\begin{center}
\includegraphics[scale=0.35]{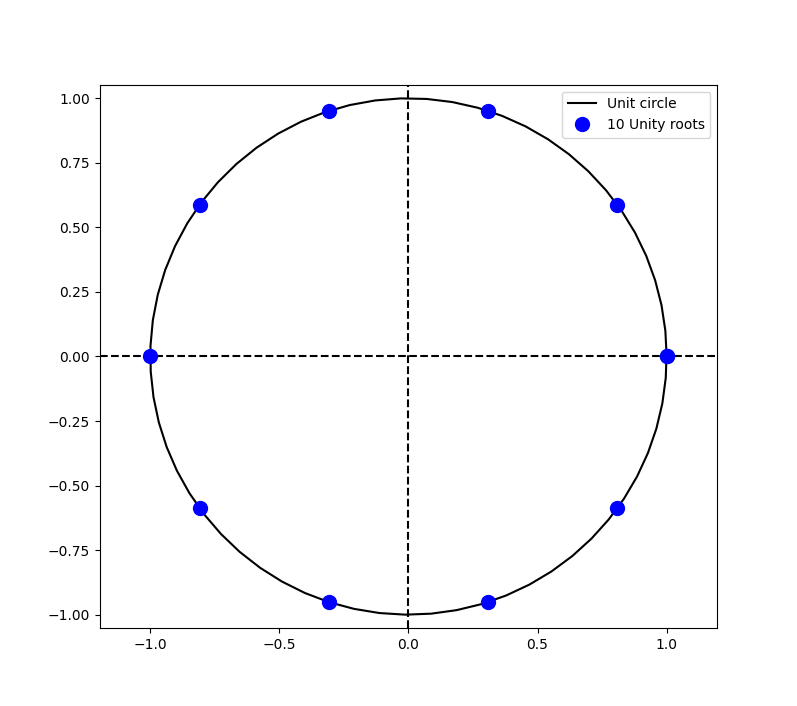}
\caption{{Plot} of the 10-th unity roots, i.e  solutions to the equation $z^{10} = 1$.}
\label{fig:unityroots}
\end{center}
\end{figure}

\begin{figure}[h!]
\centering
\includegraphics[scale=0.04]{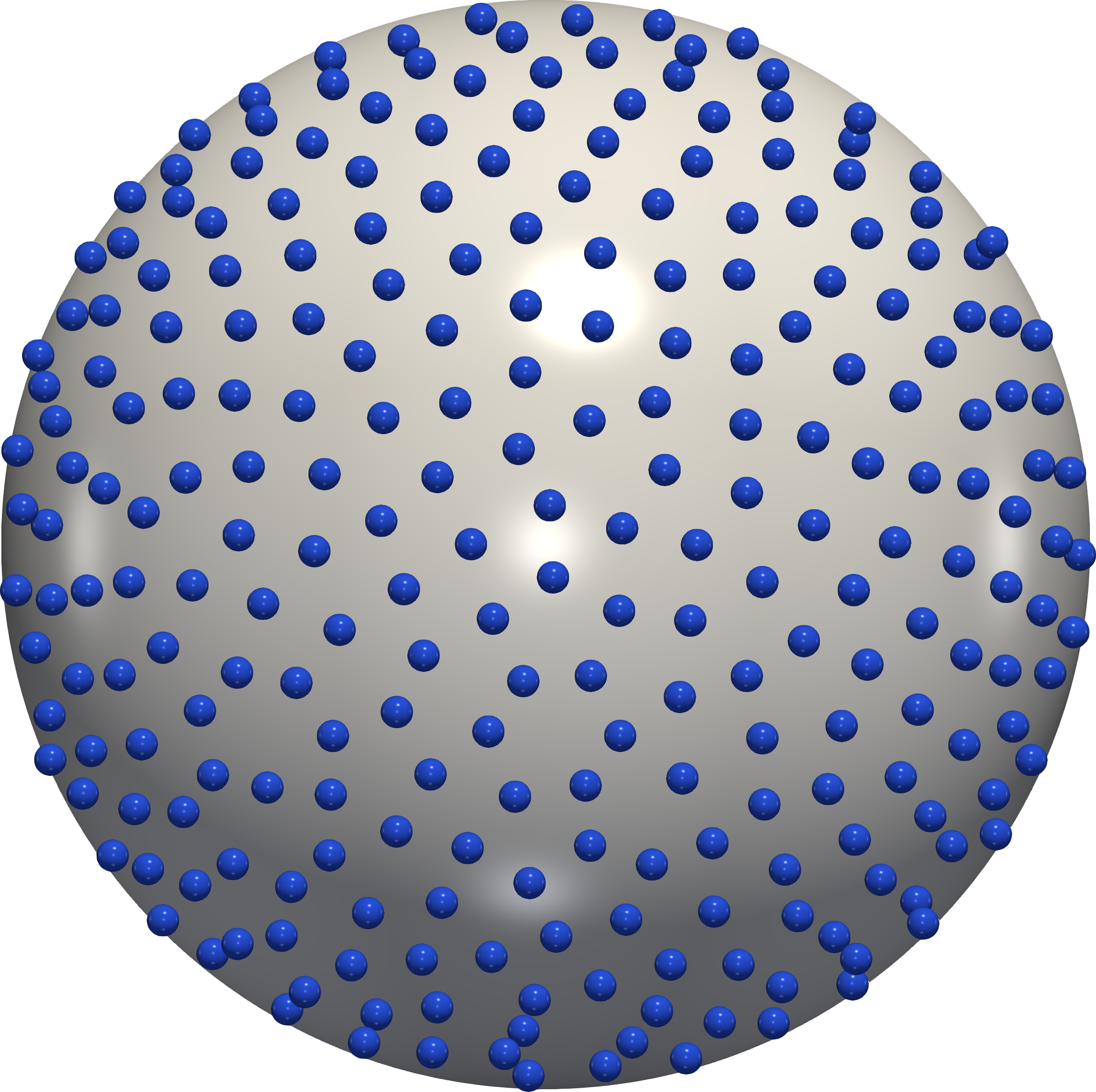}
\hspace{1cm}
\includegraphics[scale=0.041]{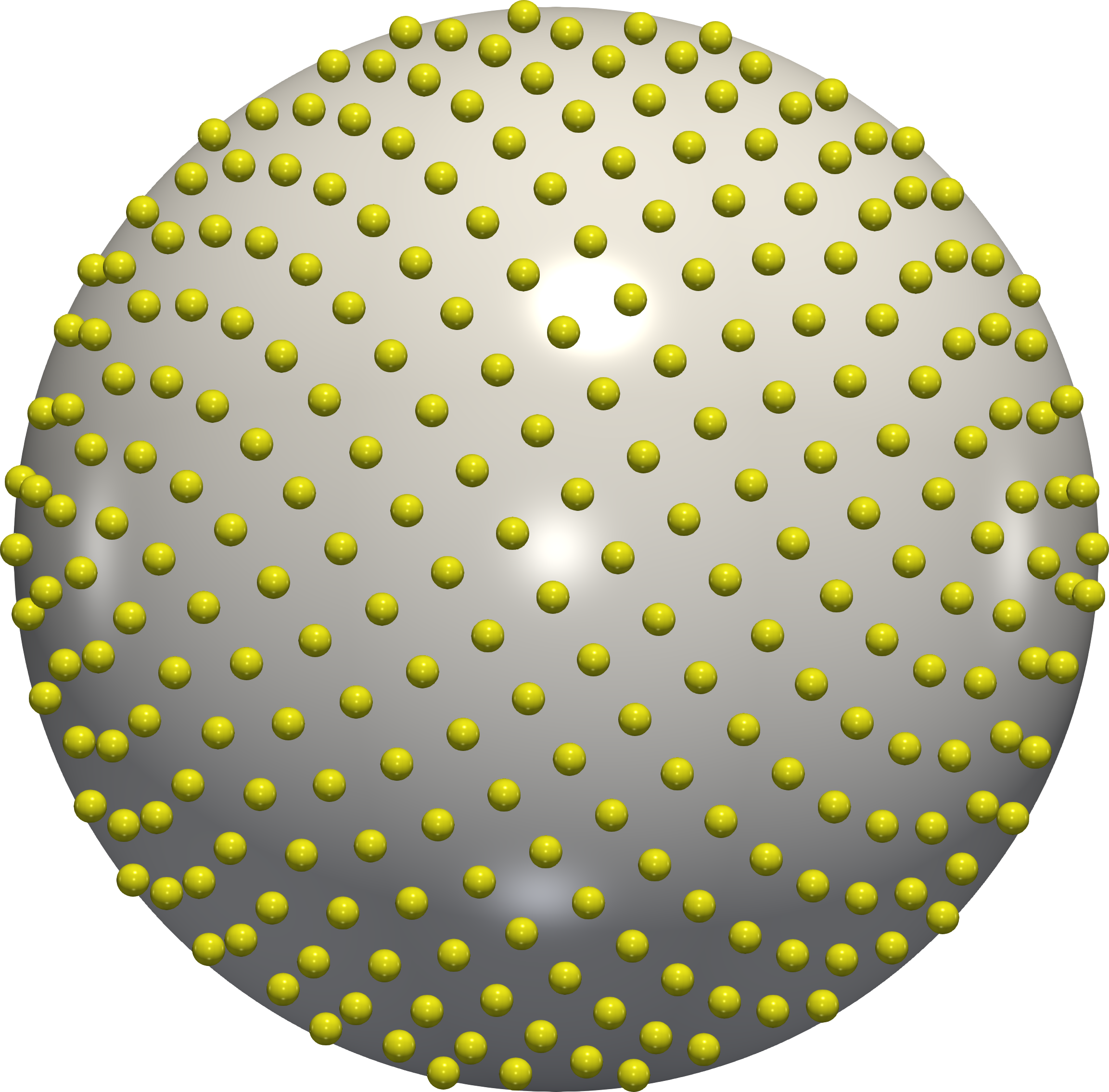}
\caption{Illustration of s-Riesz points (on the left) and Fibonacci points (on the right) {on $S^2$}, with $500$ points for both configurations. }
\label{fig:rieszAndfig}
\end{figure}

\subsubsection{Random Quasi Monte-Carlo} \label{par:rqmc} 

The principle of Randomized Quasi-Monte Carlo (R.Q.M.C.) methods is to reintroduce stochasticty in Q.M.C. sequences.  Indeed, Q.M.C. methods such as the ones described in Sections~\autoref{sub:lds} and~\autoref{sec:QMCSphere} are deterministic. For a given $M$, the estimator given by one of these methods is always the same. As such, we cannot easily estimate the error or the variance of the Monte Carlo approximation. Besides, while results such as the Koksma-Hlawka inequality ensures that they converge at a certain rate, the different quantities involved in the inequality are much more complex to estimate than the one involved in the Central Limit theorem.  Random Quasi-Monte Carlo methods were especially designed to recover this ability to estimate the error easily.
These sequences are usually defined on $[0,1]^d$.
\begin{D}[\citep{owen2019monte}]
Let $\{\Ranu_i\}_{i \geq 1}$ be a sequence of points in $[0,1]^d$. It is said to be suitable for R.Q.M.C. if $\forall i$, $\Ranu_i \sim \mathcal{U}([0,1]^d)$ and if there exist a finite $ c > 0 $ and $K > 0$ such that for all $M\geq K$, 
\begin{equation*}
\mathbb{P}\left[D_M^*(\RanP_M) < c\frac{log^d(M)}{M}\right] = 1, \;\;\text{where}\;\; \RanP_M = \{\Ranu_1, \hdots, \Ranu_M\}.
\end{equation*}
\end{D}

\noindent {Denoting}  $X_M = \displaystyle\frac{1}{M}\sum\limits_{i = 1}^{M} h(\Ranu_i)$ the  {empirical} estimator of $\mathbb{E}_{\theta\sim \sd}[h(\theta)]$, the assumption $\Ranu_i \sim \mathcal{U}([0,1]^d)$ implies that  $X_M$ is unbiased. Besides, the previous inequality implies that if  $\{\Ranu_i\}_{i \geq 1}$ is suitable for R.Q.M.C., then the variance of $X_M$ is bounded by $c^2 V_h^2 \frac{log^{2d}(M)}{M^2}$. For functions $h$ such that $V_h <\infty$, this yields a convergence rate in $\mathcal{O}\big(\log^{d}(M)/M\big)$, similar to the one of low discrepancy sequences. 

Once a randomization method is chosen (such that it provides suitable R.Q.M.C. sequences), the process can be repeated several times to obtain $K$ independent random estimators $X_M^1,\dots X_M^K$ of $\mathbb{E}_{\theta\sim \sd}[h(\theta)]$. The agregated estimate $ X_{M,K} = \displaystyle\frac{1}{K}\sum\limits_{k = 1}^{K} X_M^K$ has a variance decreasing in $O\big(\log^{d}(M)/(MK^{-1/2}) \big)$. One of the key advantages of this approach is that this variance (or confidence intervals) can be estimated by the empirical variance of the $K$ independent estimators.

\noindent 
There are several  ways to generate sequences from low discrepancy sequences on 
$[0,1]^d$ in order to make them suitable for R.Q.M.C.. One of the most simple 
methods consists in applying the same random shift $U$ to all points in the 
sequence, and taking the result modulo $1$ 
componentwise~\citep{lemieux2009monte}. More involved methods, such as Digital 
shift or Scrambling, are described in~\citep{lemieux2009monte} 
and~\citep{owen2019monte}.  

However, to the best of our knowledge, there is no proper R.L.D.S. on the sphere, as stated by \citet{nguyen2024quasimonte}. 
In practice,  R.L.D.S. on the unit cube are mapped onto the sphere by the methods described in \autoref{par:ldsSphere}. Another possibility, as done in \citet{nguyen2024quasimonte}, is to generate a random rotation matrix and apply it directly on point configurations on $\Sd$, {such as the ones described in~\autoref{sec:QMCSphere}}.

{
\subsection{Spherical Sliced Wasserstein}}

{A sampling method based on a Sliced-Wasserstein type discrepancy on the sphere $\Sd$ was developped by \citet{bonet2023sphericalslicedwasserstein} for $d\geq 3$.
We denote $\C$ the set of great circles of $\Sd$, a great circle being the intersection between a plane of dimension 2 and $\Sd$ \citep{jung2012}. The authors of \citet{bonet2023sphericalslicedwasserstein}
define a pseudo distance, called Spherical Sliced Wasserstein distance, between two probability measures $\Theta,\Xi$ defined on $\Sd$:
\begin{equation}\label{eq:SSW}
SSW_2^2(\Theta,\Xi) = \displaystyle\int_{\C} W_2^2(\pi_C\#\Theta,\pi_C\#\Xi)d\zeta(C),
\end{equation}
where for all $x\in\Sd, \pi_C(x) = \argmin_{y\in C} d_{\Sd}(x,y)$ with $d_{\Sd}(x,y) = \text{arcos}(\langle x,y\rangle)$~\citep{Fletcher2004} and $\zeta$ is the uniform distribution over $\C$.\\
As shown in~\citet{bonet2023sphericalslicedwasserstein}, this distance can be used to sample points on $\Sd$ by minimizing $SSW_2$ between a discrete measure $\Theta = \frac{1}{M}\sum\limits_{i=1}^M \delta_{\theta_i}$ and the uniform measure $\Xi = \sd$ on $\Sd$. 
To this aim, for $C_1,\hdots,C_L$ L independent great circles, they approximate $SSW_2^2(\Theta,\Xi)$  by  its Monte Carlo approximation $Z_L(\Theta,\Xi) =\displaystyle\frac{1}{L}\sum\limits_{l = 1}^L W_2^2(\pi_{C_l}\# \Theta, \pi_{C_l}\# \Xi)$.
Then, they note that $\pi_{C_l}\#\sd = s_1$~\citep{Jung2021} for each $l$, and derive a closed form for $W_2^2(\pi_{C_l}\# \Theta, s_1)$ based on \citet{ delon2010transportationdistancescircleapplications}.
The final distance $SSW_2^2(\Theta,\Xi)$ can then be optimized with respect to the point positions $\theta_i$ with a projected gradient descent.
\begin{Rk}
There are cases in which $SSW$ is a metric:
\begin{itemize}
\item[$\bullet$] Based on \citet{Quellmalz_2023}, $SSW$ is a metric between any two probability measures on $\mathbb{S}^2$.
\item[$\bullet$] A result from \citet{liu2024linearsphericalslicedoptimal} also shows that $SSW$ is a metric between any two absolutely continuous probability measures with continuous density functions on $\Sd$ for $d\geq 3$.
\end{itemize}
\end{Rk}
\begin{Rk}
Noting $T$ the number of iterations for the gradient descent algorithm, and $L$ as above, then the time complexity of this method is in $\mathcal{O}(TLM\text{log}(M))$.
\end{Rk}
\begin{Rk}
Notice that $SSW$'s form is similar to the $\mathbb{L}_2$-spherical cap discrepancy, where instead of averaging the "error" made by the sampling on a spherical cap, it averages the "error" made by the sampling on a great circle.
\end{Rk}
}

\subsection{Variance reduction} \label{sec:CV}
All methods described so far are based on the idea of generating points on the 
sphere in such a way that these points are sufficiently well distributed to be used 
for Monte Carlo integration, and ideally yield faster convergence than 
{M.C.}
with i.i.d. sequences. These point sequences or point sets are defined 
independently of the function to be integrated. 

More involved approaches, such as importance sampling or control variates, use the knowledge of the function to be integrated to improve Monte Carlo estimators by decreasing their variance. 
Recently, two control variates based methods have been developped to estimate the Sliced Wasserstein distance.
A control variate is a centered random vector $Y \in \mathbb{R}^p$, easy to sample, with finite second moments. Assume we want to estimate $\mathbb{E}_{\theta \sim \sd}[f(\theta)]$. Writing $\theta_1,\hdots,\theta_M$ i.i.d. samples of $\theta \sim \sd$ and  {$Y_1,\hdots,Y_M$ $M$} independent copies of the random centered vector $Y$, we consider the following estimator 
\begin{equation*}
     \displaystyle\frac{1}{M}\sum\limits_{i=1}^M [f(\theta_i) - \beta^T Y_i],
\end{equation*}
where $\beta \in \R^p$ is a constant vector to be determined. The variance of this estimator is proportional to $\mathrm{Var}(f(\theta) - \beta^T Y)$.
It follows that if we write $\beta^*$ the parameter minimizing this variance, then the pair $(\mathbb{E}(f(\theta)),\beta^*)$ is solution of the least square problem
\[\min_{(\zeta,\beta)\in \mathbb{R}\times \mathbb{R}^p} \mathbb{E}[(f(\theta) - \zeta - \beta^TY )^2].\]
An empirical version of this quadratic problem on a sample $(\theta_1,\hdots,\theta_M)$ writes
\begin{equation}\label{eq:leastSquares}
    (\widehat{\mathbb{E}(f(\theta))}_M,\beta_M)= \argmin_{\zeta,\beta\in\R\times\R^p} \| \textbf{F} -\zeta \mathbb{1}_M -\textbf{Y}\beta\|_2^2
\end{equation}
where {$\textbf{F} = \big(f(\theta_i)\big)^T_{i = 1,\hdots,M}$, $\mathbb{1}_M = (1,\hdots,1)^T\in\R^M$, and $\textbf{Y} = \big(Y_i^T\big)_{i = 1,\hdots,M}\in \mathbb{R}^{M\times p}$}. 

{Recently, \citet{nguyen2024slicedwassersteinestimationcontrol} introduced a Sliced Wasserstein distance estimation using Gaussian control variates and \citet{leluc2024slicedwassersteinestimationsphericalharmonics} developped a method using spherical harmonics control variates. We focus only on \citet{leluc2024slicedwassersteinestimationsphericalharmonics} here, since their method yields much better experimental results.} 
In their work, \citet{leluc2024slicedwassersteinestimationsphericalharmonics} chose Spherical Harmonics
\citep{Muller1998} as control variates. 
Spherical harmonics are functions which form an orthonormal basis $(\phi_i)$ of the Hilbert space $L^2(\Sd,\sd)$.  
In this setting, the random variable $Y$ is thus chosen as $Y = (\phi_i(\theta))_{i = 1,\hdots,p}$, with $\theta \sim \sd$. In practice, the number $p$  is chosen as $p = L_{n,d} = \displaystyle\sum\limits_{l = 1}^n N(d,2l)$, the number of spherical harmonics of even degree up to $2n$, with $N(d,n) = (2n +d -2)\frac{(n + d -3)!}{(d-2)!n!}$ the number of spherical harmonics of degree $n$ in dimension $d$.

\citet{leluc2024slicedwassersteinestimationsphericalharmonics} then compute the  solution $(SHCV_{M,n}^2,\beta_M)$ of~\eqref{eq:leastSquares} on a sample $(\theta_1,\hdots,\theta_M)$ and use the control variates estimator $SHCV_{M,n}^2$ as  estimator of the (squared) Sliced Wasserstein distance. 
They prove the following convergence property.
\begin{Prop}
Let $\mu,\nu$ be two discrete measures in $\R^d$ with finite moments of order 2 and let $d \geq 2$.
For any sequence of degrees $n = (n_M)_M$ such that $n_M = o\bigg(M^{1/\big(2(d-1)\big)}\bigg)$ as $M\longrightarrow +\infty$, we have
\begin{equation}\label{eq:SHCVConv}
\big|SHCV_{M,n}^2(\mu,\nu) - \SW(\mu,\nu)\big| = \mathcal{O}_{\mathbb{P}}\left(\frac{1}{n M^{1/2}}\right), 
\end{equation}
where the notation $X_n = \mathcal{O}_{\mathbb{P}}(a_n)$ means that the sequence $\frac{X_n}{a_n}$ is stochastically bounded~\footnote{The notation $X_n = \mathcal{O}_{\mathbb{P}}(a_n)$ means that for all $\epsilon > 0$, there exists finite $K >0 $ and $N >0$ such that $\mathbb{P}[|X_n| > K a_n ]< \epsilon$ for all $n > N$.}.\end{Prop}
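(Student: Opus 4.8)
The plan is to read $SHCV_{M,n}^2(\mu,\nu)$ as the intercept of the ordinary least-squares regression~\eqref{eq:leastSquares} of $f$ onto the constant together with the $p = L_{n,d}$ spherical harmonics of even degree up to $2n$, and to split its error into a population-optimal \emph{oracle} part and an \emph{estimation remainder}. Writing $Y_i = (\phi_1(\theta_i),\dots,\phi_p(\theta_i))^T$, the $L^2(\Sd,\sd)$-orthonormality of the $\phi_j$ makes the population Gram matrix the identity, so the variance-minimizing pair is $(\zeta^*,\beta^*) = (\SW(\mu,\nu),\,c^*)$, where $c^*$ collects the spherical-harmonic coefficients of $f$ and $P_n f := \zeta^* + (\beta^*)^T Y$ is the $L^2$-projection of $f$ onto constants plus even harmonics up to degree $2n$. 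The oracle estimator $\bar f_M - (\beta^*)^T \bar Y_M$, with $\bar f_M = \tfrac1M\sum_i f(\theta_i)$ and $\bar Y_M = \tfrac1M\sum_i Y_i$, is unbiased for $\SW(\mu,\nu)$ since $\mathbb{E}[Y]=0$, and has variance $\tfrac1M\,\mathbb{V}(f-(\beta^*)^TY) = \tfrac1M\,\|f - P_n f\|_{L^2}^2$.

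First I would bound the \textbf{approximation error} $\|f - P_n f\|_{L^2}$. By the $H^1(\Sd)$ regularity of $f$ (Proposition~\ref{prop:Sobol}), and since $f$ is even (projecting onto $\theta$ or $-\theta$ gives reflected one-dimensional measures with the same $W_2$ distance, so all odd-degree coefficients vanish and $P_n$ sees all of $f$'s low frequencies), expanding $f = \sum_\ell \sum_m c_{\ell m}\phi_{\ell m}$ and using that $\|f\|_{H^1}^2 \asymp \sum_\ell \big(1+\ell(\ell+d-2)\big)\sum_m |c_{\ell m}|^2$ gives the high-frequency tail bound $\|f - P_n f\|_{L^2}^2 = \sum_{\ell > 2n}\sum_m |c_{\ell m}|^2 \le (2n)^{-2}\|f\|_{H^1}^2 = \mathcal{O}(n^{-2})$. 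Hence the oracle estimator has standard deviation $\mathcal{O}(1/(n M^{1/2}))$, which is exactly the target rate~\eqref{eq:SHCVConv}.

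The core of the argument is then to show that replacing $\beta^*$ by the empirical OLS coefficient $\beta_M$ contributes only a lower-order term. I would write $SHCV_{M,n}^2(\mu,\nu) - \SW(\mu,\nu) = \big(\bar f_M - (\beta^*)^T\bar Y_M\big) - (\beta_M - \beta^*)^T \bar Y_M$ and bound the remainder by $\|\beta_M - \beta^*\|\,\|\bar Y_M\|$. Each coordinate of $\bar Y_M$ is centered with variance $1/M$, so $\|\bar Y_M\| = \mathcal{O}_{\mathbb{P}}(\sqrt{p/M})$. By the addition theorem, the envelope $\|Y(\theta)\|^2 = \sum_{\ell \text{ even}\le 2n} N(d,\ell) = p$ is \emph{constant} in $\theta$, so the empirical Gram matrix $\hat G_M = \tfrac1M\sum_i Y_iY_i^T$ concentrates around the identity and is invertible with high probability; on that event $\beta_M - \beta^* \approx \hat G_M^{-1}\,\tfrac1M\sum_i Y_i\,r(\theta_i)$ with residual $r = f - P_n f$. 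Since $\mathbb{E}[Y r]=0$ and $\mathbb{E}\big\|\tfrac1M\sum_i Y_i r(\theta_i)\big\|^2 = \tfrac1M\,\mathbb{E}[\|Y\|^2 r^2] = \tfrac{p}{M}\|r\|_{L^2}^2 = \mathcal{O}(p/(n^2 M))$, we get $\|\beta_M - \beta^*\| = \mathcal{O}_{\mathbb{P}}(\sqrt p/(n M^{1/2}))$, so the cross term is $\mathcal{O}_{\mathbb{P}}(p/(nM))$.

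The \textbf{main obstacle} is precisely this uniform control of $\beta_M-\beta^*$ over the growing family of $p \asymp n^{d-1}$ harmonics: both the conditioning of $\hat G_M$ and the linearization $\beta_M - \beta^* \approx \hat G_M^{-1}(\cdots)$ must hold with high probability, which is where the sharp spherical-harmonic envelope $\|Y\|^2 = p$ ties the admissible growth of $n$ to $M$. Since $p = L_{n,d}\asymp n^{d-1}$, the hypothesis $n_M = o\big(M^{1/(2(d-1))}\big)$ is equivalent to $p = o(M^{1/2})$, which makes the cross term $\mathcal{O}_{\mathbb{P}}(p/(nM)) = o_{\mathbb{P}}(1/(nM^{1/2}))$, i.e. negligible against the oracle term. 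Combining the oracle contribution $\mathcal{O}_{\mathbb{P}}(1/(nM^{1/2}))$ with this vanishing remainder yields the stated bound~\eqref{eq:SHCVConv}.
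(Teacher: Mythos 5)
The paper itself does not prove this proposition: it is quoted verbatim from \citet{leluc2024slicedwassersteinestimationsphericalharmonics} (``They prove the following convergence property''), so there is no in-paper argument to compare yours against. Judged on its own merits, your reconstruction follows essentially the same route as the proof in that cited reference, which rests on the general theory of OLS-based control variates: (i) identify $SHCV^2_{M,n}$ as the OLS intercept, (ii) split off the oracle estimator whose variance is $\tfrac1M\|f-P_nf\|^2_{L^2}$, (iii) bound the approximation error by $\mathcal{O}(n^{-1})$ via the $H^1(\Sd)$ regularity of $f$ (the paper's Proposition~\ref{prop:Sobol}), and (iv) show the empirical-coefficient remainder is negligible precisely when $p = L_{n,d}\asymp n^{d-1} = o(M^{1/2})$, which is exactly the stated degree condition. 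Your observation that $f$ is even (so that the restriction to even-degree harmonics loses nothing in the tail bound) is a genuinely necessary ingredient that is easy to miss, and your use of the addition theorem to get the constant envelope $\|Y(\theta)\|^2 = p$ is the correct structural fact driving the whole remainder analysis.

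Two steps remain at the level of assertion rather than proof, though both are fillable by standard tools and you correctly flag the first as the main obstacle. First, the claim that the empirical Gram matrix $\hat G_M$ concentrates around the identity needs an actual matrix concentration inequality (e.g.\ matrix Bernstein applied to the rank-one summands $Y_iY_i^T$, whose operator norm is exactly $p$); under $p = o(M^{1/2})$ this gives $\|\hat G_M - I\| = \mathcal{O}_{\mathbb{P}}\bigl(\sqrt{p\log p/M}\bigr) = o_{\mathbb{P}}(1)$, so the inversion and the lower eigenvalue bound go through. Second, the identity $\beta_M - \beta^* = \hat G_M^{-1}\bigl(\tfrac1M\sum_i Y_i r(\theta_i)\bigr)$ is only exact when both the design and the residuals are empirically centered, i.e.\ with $(Y_i - \bar Y_M)$ and $(r(\theta_i) - \bar r_M)$ in place of $Y_i$ and $r(\theta_i)$; the correction terms involve products such as $\bar Y_M \bar r_M$ and are of strictly lower order, but they should be written out and bounded rather than absorbed into an ``$\approx$''. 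With those two points made rigorous, your argument delivers the stated $\mathcal{O}_{\mathbb{P}}\bigl(1/(n M^{1/2})\bigr)$ rate and matches the proof strategy of the cited source.
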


Notice that since $n_M = o\bigg(M^{1/\big(2(d-1)\big)}\bigg)$, in high dimensions $d$ the global convergence rate is similar to that of the classical Monte Carlo method described in \autoref{sec:unif}.

\section{Experimental results} \label{sec:Exp}

This section presents experimental results from all the different 
sampling strategies 
{presented in }
\autoref{sec:Sampling}, on 
{a variety of}
datasets. {To provide representative results, we select
datasets spanning a range of dimensions going from 2 to $28\times 28$. Those 
include a toy dataset and three "real-life" ones.}
We first present results on Gaussian mixtures in the following dimensions 
$\lbrace 2, 3, 5, 10, 20, 50\rbrace$, the six ground truths (true distances) are 
estimated using 
{$10^8$}
projections.
{Secondly, we show some 
{dimensionality}
reduction results on 12 different datasets of persistence diagrams 
{(}for the case of $2$ dimensional discrete measures{)}. Then we 
show some convergence results in the specific case of measures in $3$ 
{dimensions.}
{Specifically, }
we compare different 
{estimations}
of 
{the}
Sliced Wasserstein distance 
between $3$D point 
{clouds}
taken from the ShapeNetCore dataset, see 
\citep{chang2015shapenetinformationrich3dmodel}. Finally we compare
different 
{dimensionality}
reduction results on the MNIST dataset 
\citep{lecun1998mnist}.} For the experiments on the Gaussian mixtures we compare 
the listed strategies with the following sampling numbers $\lbrace 100, 300, 
500, 700, 900, 1100, 2100, 3100, 4100, 5100, 6100, 7100, 8100, 9100, 
10100\rbrace$. Otherwise, we use the following sampling numbers $\{100, 1100, 
2100, 3100, 4100, 5100, 6100, 7100, 8100, 9100, 10100\}${.} 
{\autoref{Tab:legend} displays the acronyms of all the sampling methods 
compared in the following experiments.} 
{{For each sampling method from}
\autoref{Tab:legend}, there are two {variants} finishing 
{with the 
term} "Area Mapped" and two {variants} finishing 
{with the term}
"Normalized Mapped". 
The first one means that we applied the equal area mapping detailed in 
\autoref{par:ldsSphere}. The second one means we normalize each point generated 
by those methods, this normalization method is also detailed in 
\autoref{par:ldsSphere}.}

\begin{table}[h!]
\centering
 \resizebox{0.6\linewidth}{!}{
 \begin{tabular}{ |p{7cm}||r|r|  }
  \hline
  Name &  Legends & {Dimensions}\\
  \hline
  Riesz Randomized & R.R. & {2,3,5,10,20,50}  \\
  Uniform Sampling & U.S. & {2,3,5,10,20,50}\\
  Othornormal Sampling & O.S. & {2,3,5,10,20,50}\\
  Halton Area Mapped & H.A.M. & {2,3}\\
  Halton Randomized Area Mapped & H.R.A.M. & {3}\\
  Halton Normalized Mapped & H.N.M. & {5,10,20,50} \\
  Halton Randomized Normalized Mapped & H.R.N.M & {5,10,20,50} \\
  Fibonacci Point Set & F.P.S. & {3} \\
  Fibonacci Randomized Point Set & F.R.P.S. & {3} \\
  Sobol Area Mapped & S.A.M. & {3}\\
  Sobol Randomized Area Mapped & S.R.A.M. & {3} \\
  Sobol Normalized Mapped & S.N.M. & {5,10,20}\\
  Sobol Randomized Normalized Mapped & S.R.A.M. & {5,10,20} \\
  Spherical Harmonics Control Variates & S.H.C.V. & {3,5,10,20}\\
  {Spherical Sliced Wasserstein Randomized} & {S.S.W.R.} & {3,5,10,20,50}\\
  \hline
 \end{tabular}}
 \caption{{For each method used in this experimental part, associated  acronym, and list of dimensions where this method is used}.}
 \label{Tab:legend}
\end{table}

\subsection{Implementation of the sampling methods} \label{sec:implementation}

This section provides details on the implementations used for the sampling methods, and specifies how the parameters are set. The implementations used are grouped and are available here \url{https://anonymous.4open.science/r/SW-Sampling-Guide-C157/README.md}.
\begin{itemize}

\item \textbf{Classical M.C. methods: }For both methods we used python included functions to sample a Gaussian variable and to sample orthogonal matrices in $d$ dimension. For sampling orthogonal matrices we use the following python library scipy.stats.ortho\_group  \url{https://docs.scipy.org/doc/scipy/reference/generated/scipy.stats.ortho_group.html}.

\item \textbf{Halton \& Sobol sequences: }In dimension 3 and less, we use python implementations from the library scipy.stats.qmc (\url{https://docs.scipy.org/doc/scipy/reference/generated/scipy.stats.qmc.Halton.html} \& \url{https://docs.scipy.org/doc/scipy/reference/generated/scipy.stats.qmc.Sobol.html}. As for the parameters we set "scramble" to True to get the randomized version.
For high dimensions, we use \citet{leluc2024slicedwassersteinestimationsphericalharmonics}'s implementation available here \url{https://github.com/RemiLELUC/SHCV}.
\begin{Rk}
{
For the Sobol sequence, we noticed that the implementation provided by \citet{leluc2024slicedwassersteinestimationsphericalharmonics} cannot be used in dimension higher than 20.
}
\end{Rk}

\item \textbf{Riesz point configuration: }{We use a code provided by François Clement (\url{https://sites.math.washington.edu/~fclement/}),} implementing a projected gradient 
descent method, where we choose the number of iterations as $T = 10$, the gradient 
step as
{1}
and $s = 0.1$. {The function can be found in the riesz\_noblur.py script in the repository \url{https://anonymous.4open.science/r/SW-Sampling-Guide-C157/README.md}.}

\item {\textbf{Spherical Sliced Wasserstein: } We used the following implementation from \citet{bonet2023sphericalslicedwasserstein} that can be found in POT library (Python Optimal Transport) \url{https://pythonot.github.io/auto_examples/backends/plot_ssw_unif_torch.html}.
For the hyper-parameters we set the number of iteration $T = 250$, the learning rate $\epsilon = 150$ and the number of great circles $L = 500$. For the initialization, we generate $\theta_1,\hdots,\theta_M\sim \sd$ following the method described in \autoref{sec:unif}.}

\item \textbf{Spherical Harmonics Control Variates: }We use the implementation provided by \citet{leluc2024slicedwassersteinestimationsphericalharmonics}, available in \url{https://github.com/RemiLELUC/SHCV}. {They provide two possible functions}  \textbf{SHCV} and \textbf{SW\_CV}, both functions return a value of a SW estimate. {These functions differ in the way they implement the optimization of~\autoref{eq:leastSquares}. Depending on the number of control variates, one of the functions is much more efficient than the other. For this reason, in our experiments, we use both functions and always keep only the minimal error among the two.} 

\end{itemize}

\subsection{Gaussian data} \label{sec:GaussExp}

{This part details the experiments on a toy dataset chosen because it is simple to replicate and simple to understand.}
{We} compare different estimates of $\SW(\mu_d, \nu_d)$ for $d\in \lbrace 
2,3,5,10,20,50\rbrace$. We pick up 
\citet{leluc2024slicedwassersteinestimationsphericalharmonics}'s setting, using 
$\mu_d = \frac{1}{N}\sum\limits_{i=1}^{N} \delta_{x_i}$ and $\nu_d = 
\frac{1}{N}\sum\limits_{i=1}^{N} \delta_{y_i}$ with $x_1,\hdots,x_N\sim 
\mathcal{N}(x, \textbf{X})$, $y_1,\hdots,y_N \sim \mathcal{N}(y, \textbf{Y})$, 
where $N = 1000$. The means are {drawn as} $x,y\sim \mathcal{N}(\mathbb{1}_d, 
I_d)$ and the covariances are $X,Y= \Sigma_x\Sigma_x^T, \Sigma_y\Sigma_y^T$ 
where all entries of the matrices are drawn {using} the standard normal 
distribution. In \autoref{fig:comparisonConvRateGauss}, we show  
convergence curves generated by all the different sampling strategies 
{in all the}
dimensions listed above. 
{\autoref{fig:comparisonTimerRateSWIncludedGauss} reports the 
distance estimation error as a function of computation time (in seconds).}
In 
both figures, both axes are log scaled. We can see in 
\autoref{fig:comparisonConvRateGauss} that up to dimension 5, Q.M.C. methods are 
preferable convergence wise, then the orthonormal sampling is preferable in 
dimension 20 and 50. 
{In contrast,}
we can see in \autoref{fig:comparisonTimerRateSWIncludedGauss} that for 
{dimensions} less than 10,  
{the S.H.C.V. method has a better error,}
with 
similar running time. 
{For}
higher 
{dimensions}, 
{however,}
the orthonormal sampling is 
{much faster,}
{for a given error target.}

\begin{Rk}
One may notice in \autoref{tikz:toyConvRate2} that both the 
{S.H.C.V.}
method and the Q.M.C. method with the s-Riesz points {(R.R.)} reach a 
plateau at around $10^3$ projections. Our hypothesis is that both methods have a 
better estimation of $\SW$ than the simple random sampling with {$10^8$} 
projections that we use as a ground truth. new{We test this hypothesis in a simple case where 
$\SW(\mu,\nu)$ can be computed explicitly. We define $\mu = 
\frac{1}{2}[\delta_{x_1} + \delta_{x_2}]$  and $\nu = \frac{1}{2}[\delta_{y_1} + 
\delta_{y_2}]$, with $ x_1,x_2 = (1,0,0)^T, (0,-1,0)^T$ and $y_1,y_2 = 
(0,0,1)^T,(0,0,-1)^T$.
Simple computations yield $\SW(\mu,\nu) = \frac{2(\pi - 
\sqrt{2})}{3\pi}$. Knowing the true value of $\SW(\mu,\nu)$, we find that with 
$10^4$ points, {the Q.M.C. method with the s-Riesz points configuration and 
the S.H.C.V. methods already have errors one order smaller that ones made by uniform sampling with $10^8$ points.}}
\end{Rk}

\begin{figure}[h!]
\begin{subfigure}{0.4\textwidth}
\begin{tikzpicture}
\begin{axis}[group style={group name=plots,},xlabel={Number of samples},
ylabel={Error}, legend pos = outer north east, legend cell align={left},ytick={0.01 , 0.00001 , 0.00000001 , 0.00000000001 } , yticklabels = {$10^{-2}$,$10^{-5}$,$10^{-8}$}, xmode= log, ymode= log]
\addplot[curve3] table [x=N_sample, y=Error, col sep=comma] {Experiments/ErrorsToy/new_errors_riesz_smoothed_in_2D.csv};
\addlegendentry{R.R.}
\addplot[curve4] table [x=N_sample, y=Error, col sep=comma] {Experiments/ErrorsToy/errors_unif_in_2D.csv};
\addlegendentry{U.S.}
\addplot[curve5] table [x=N_sample, y=Error, col sep=comma] {Experiments/ErrorsToy/errors_ortho_in_2D.csv};
\addlegendentry{O.S.} 
\addplot[curve10] table [x=N_sample, y=Error, col sep=comma] {Experiments/ErrorsToy/errors_haltonarea_in_2D.csv};
\addlegendentry{H.A.M.} 
\end{axis}
\end{tikzpicture}
\caption{2D}
\label{tikz:toyConvRate1}
\end{subfigure}
\hspace{1.7cm}
\begin{subfigure}{0.4\textwidth}
\begin{tikzpicture}
\begin{axis}[group style={group name=plots,},xlabel={Number of samples}, legend pos = outer north east, legend cell align={left}, xmode= log, ymode= log]
\addplot[curve1] table [x=N_sample, y=Error, col sep=comma] {Experiments/ErrorsToy/errors_sphereharmonics_concatenated_in_3D.csv};
\addlegendentry{S.H.C.V.} 
\addplot[curve3] table [x=N_sample, y=Error, col sep=comma] {Experiments/ErrorsToy/new_errors_riesz_smoothed_in_3D.csv};
\addlegendentry{R.R.} 
\addplot[curve4] table [x=N_sample, y=Error, col sep=comma] {Experiments/ErrorsToy/errors_unif_SWIncluded_in_3D.csv};
\addlegendentry{U.S.} 
\addplot[curve5] table [x=N_sample, y=Error, col sep=comma] {Experiments/ErrorsToy/errors_ortho_SWIncluded_in_3D.csv};
\addlegendentry{O.S.} 
\addplot[curve10] table [x=N_sample, y=Error, col sep=comma] {Experiments/ErrorsToy/errors_haltonarea_smoothed_SWIncluded_in_3D.csv};
\addlegendentry{H.A.M.}  
\addplot[curve11] table [x=N_sample, y=Error, col sep=comma] {Experiments/ErrorsToy/errors_haltonarearand_SWIncluded_in_3D.csv};
\addlegendentry{H.R.A.M.} 
\addplot[curve12] table [x=N_sample, y=Error, col sep=comma] {Experiments/ErrorsToy/errors_fib_SWIncluded_in_3D.csv};
\addlegendentry{F.P.S.} 
\addplot[curve13] table [x=N_sample, y=Error, col sep=comma] {Experiments/ErrorsToy/errors_fib_SWIncluded_rand_in_3D.csv};
\addlegendentry{F.R.P.S.} 
\addplot[curve8] table [x=N_sample, y=Error, col sep=comma] {Experiments/ErrorsToy/errors_sobolarea_smoothed_SWIncluded_in_3D.csv};
\addlegendentry{S.A.M.} 
\addplot[curve9] table [x=N_sample, y=Error, col sep=comma] {Experiments/ErrorsToy/errors_sobolarearand_SWIncluded_in_3D.csv};
\addlegendentry{S.R.A.M.} 
\addplot[curve7] table [x=N_sample, y=Error, col sep=comma] {Experiments/ErrorsToy/errors_SSWrand_in_3D.csv};
\addlegendentry{S.S.W.R.} 
\end{axis}
\end{tikzpicture}
\caption{3D}
\label{tikz:toyConvRate2}
\end{subfigure}
\\

\begin{subfigure}{0.4\textwidth}
\begin{tikzpicture}
\begin{axis}[group style={group name=plots,},xlabel={Number of samples},
ylabel={Error}, legend pos = outer north east, legend cell align={left}, xmode= log, ymode= log]
\addplot[curve1] table [x=N_sample, y=Error, col sep=comma] {Experiments/ErrorsToy/errors_sphereharmonics_concatenated_in_5D.csv};
\addlegendentry{S.H.C.V.} 
\addplot[curve3] table [x=N_sample, y=Error, col sep=comma] {Experiments/ErrorsToy/new_errors_riesz_smoothed_in_5D.csv};
\addlegendentry{R.R.}
\addplot[curve4] table [x=N_sample, y=Error, col sep=comma] {Experiments/ErrorsToy/errors_unif_SWIncluded_in_5D.csv};
\addlegendentry{U.S.}
\addplot[curve5] table [x=N_sample, y=Error, col sep=comma] {Experiments/ErrorsToy/errors_ortho_SWIncluded_in_5D.csv};
\addlegendentry{O.S.}
\addplot[curve10] table [x=N_sample, y=Error, col sep=comma] {Experiments/ErrorsToy/errors_halton_smoothed_leluc_SWIncluded_in_5D.csv};
\addlegendentry{H.N.M.}
\addplot[curve11] table [x=N_sample, y=Error, col sep=comma] {Experiments/ErrorsToy/errors_haltonrand_leluc_in_5D.csv};
\addlegendentry{H.R.N.M.}
\addplot[curve8] table [x=N_sample, y=Error, col sep=comma] {Experiments/ErrorsToy/errors_sobol_leluc_in_5D.csv};
\addlegendentry{S.N.M.} 
\addplot[curve9] table [x=N_sample, y=Error, col sep=comma] {Experiments/ErrorsToy/errors_sobolrand_leluc_in_5D.csv};
\addlegendentry{S.R.N.M.}
\addplot[curve7] table [x=N_sample, y=Error, col sep=comma] {Experiments/ErrorsToy/errors_SSWrand_in_5D.csv};
\addlegendentry{S.S.W.R.} 
\end{axis}
\end{tikzpicture}
\caption{5D}
\label{tikz:toyConvRate3}
\end{subfigure}
\hspace{1.7cm}
\begin{subfigure}{0.4\textwidth}
\begin{tikzpicture}
\begin{axis}[group style={group name=plots,},xlabel={Number of samples}, legend pos = outer north east, legend cell align={left}, xmode= log, ymode= log]
\addplot[curve1] table [x=N_sample, y=Error, col sep=comma] {Experiments/ErrorsToy/errors_sphereharmonics_concatenated_in_10D.csv};
\addlegendentry{S.H.C.V.} 
\addplot[curve3] table [x=N_sample, y=Error, col sep=comma] {Experiments/ErrorsToy/new_errors_riesz_smoothed_in_10D.csv};
\addlegendentry{R.R.}
\addplot[curve4] table [x=N_sample, y=Error, col sep=comma] {Experiments/ErrorsToy/errors_unif_SWIncluded_in_10D.csv};
\addlegendentry{U.S.}
\addplot[curve5] table [x=N_sample, y=Error, col sep=comma] {Experiments/ErrorsToy/errors_ortho_SWIncluded_in_10D.csv};
\addlegendentry{O.S.}
\addplot[curve10] table [x=N_sample, y=Error, col sep=comma] {Experiments/ErrorsToy/errors_halton_smoothed_leluc_SWIncluded_in_10D.csv};
\addlegendentry{H.N.M.} 
\addplot[curve11] table [x=N_sample, y=Error, col sep=comma] {Experiments/ErrorsToy/errors_haltonrand_leluc_in_10D.csv};
\addlegendentry{H.R.N.M.}
\addplot[curve8] table [x=N_sample, y=Error, col sep=comma] {Experiments/ErrorsToy/errors_sobol_leluc_in_10D.csv};
\addlegendentry{S.N.M.} 
\addplot[curve9] table [x=N_sample, y=Error, col sep=comma] {Experiments/ErrorsToy/errors_sobolrand_leluc_in_10D.csv};
\addlegendentry{S.R.N.M.}
\addplot[curve7] table [x=N_sample, y=Error, col sep=comma] {Experiments/ErrorsToy/errors_SSWrand_in_10D.csv};
\addlegendentry{S.S.W.R.}
\end{axis}
\end{tikzpicture}
\caption{10D}
\label{tikz:toyConvRate4}
\end{subfigure}
\\

\begin{subfigure}{0.4\textwidth}
\begin{tikzpicture}
\begin{axis}[group style={group name=plots,},xlabel={Number of samples},
ylabel={Error}, legend pos = outer north east, legend cell align={left}, xmode= log, ymode= log]
\addplot[curve1] table [x=N_sample, y=Error, col sep=comma] {Experiments/ErrorsToy/errors_sphereharmonics_concatenated_in_20D.csv};
\addlegendentry{S.H.C.V.} 
\addplot[curve3] table [x=N_sample, y=Error, col sep=comma] {Experiments/ErrorsToy/new_errors_riesz_smoothed_in_20D.csv};
\addlegendentry{R.R.}
\addplot[curve4] table [x=N_sample, y=Error, col sep=comma] {Experiments/ErrorsToy/errors_unif_SWIncluded_in_20D.csv};
\addlegendentry{U.S.}
\addplot[curve5] table [x=N_sample, y=Error, col sep=comma] {Experiments/ErrorsToy/errors_ortho_SWIncluded_in_20D.csv};
\addlegendentry{O.S.}
\addplot[curve10] table [x=N_sample, y=Error, col sep=comma] {Experiments/ErrorsToy/errors_halton_smoothed_leluc_SWIncluded_in_20D.csv};
\addlegendentry{H.N.M.} 
\addplot[curve11] table [x=N_sample, y=Error, col sep=comma] {Experiments/ErrorsToy/errors_haltonrand_leluc_in_20D.csv};
\addlegendentry{H.R.N.M.}
\addplot[curve8] table [x=N_sample, y=Error, col sep=comma] {Experiments/ErrorsToy/errors_sobol_leluc_in_20D.csv};
\addlegendentry{S.N.M.} 
\addplot[curve9] table [x=N_sample, y=Error, col sep=comma] {Experiments/ErrorsToy/errors_sobolrand_leluc_in_20D.csv};
\addlegendentry{S.R.N.M.} 
\addplot[curve7] table [x=N_sample, y=Error, col sep=comma] {Experiments/ErrorsToy/errors_SSWrand_in_20D.csv};
\addlegendentry{S.S.W.R.}
\end{axis}
\end{tikzpicture}
\caption{20D}
\label{tikz:toyConvRate5}
\end{subfigure}
\hspace{1.7cm}
\begin{subfigure}{0.4\textwidth}
\begin{tikzpicture}
\begin{axis}[group style={group name=plots,},xlabel={Number of samples}, legend pos = outer north east, legend cell align={left}, xmode= log, ymode= log]
\addplot[curve3] table [x=N_sample, y=Error, col sep=comma] {Experiments/ErrorsToy/new_errors_riesz_smoothed_in_50D.csv};
\addlegendentry{R.R.}
\addplot[curve4] table [x=N_sample, y=Error, col sep=comma] {Experiments/ErrorsToy/errors_unif_in_50D.csv};
\addlegendentry{U.S.}
\addplot[curve5] table [x=N_sample, y=Error, col sep=comma] {Experiments/ErrorsToy/errors_ortho_in_50D.csv};
\addlegendentry{O.S.}
\addplot[curve10] table [x=N_sample, y=Error, col sep=comma] {Experiments/ErrorsToy/errors_halton_smoothed_leluc_in_50D.csv};
\addlegendentry{H.N.M.} 
\addplot[curve11] table [x=N_sample, y=Error, col sep=comma] {Experiments/ErrorsToy/errors_haltonrand_leluc_in_50D.csv};
\addlegendentry{H.R.N.M.} 
\addplot[curve7] table [x=N_sample, y=Error, col sep=comma] {Experiments/ErrorsToy/errors_SSWrand_in_50D.csv};
\addlegendentry{S.S.W.R.}
\end{axis}
\end{tikzpicture}
\caption{50D}
\label{tikz:toyConvRate6}
\end{subfigure}
\caption{Comparison of convergence rate results 
{for}
the 
{studied}
sampling methods 
{(Gaussian data, \autoref{sec:GaussExp}).}
}
\label{fig:comparisonConvRateGauss}
\end{figure}
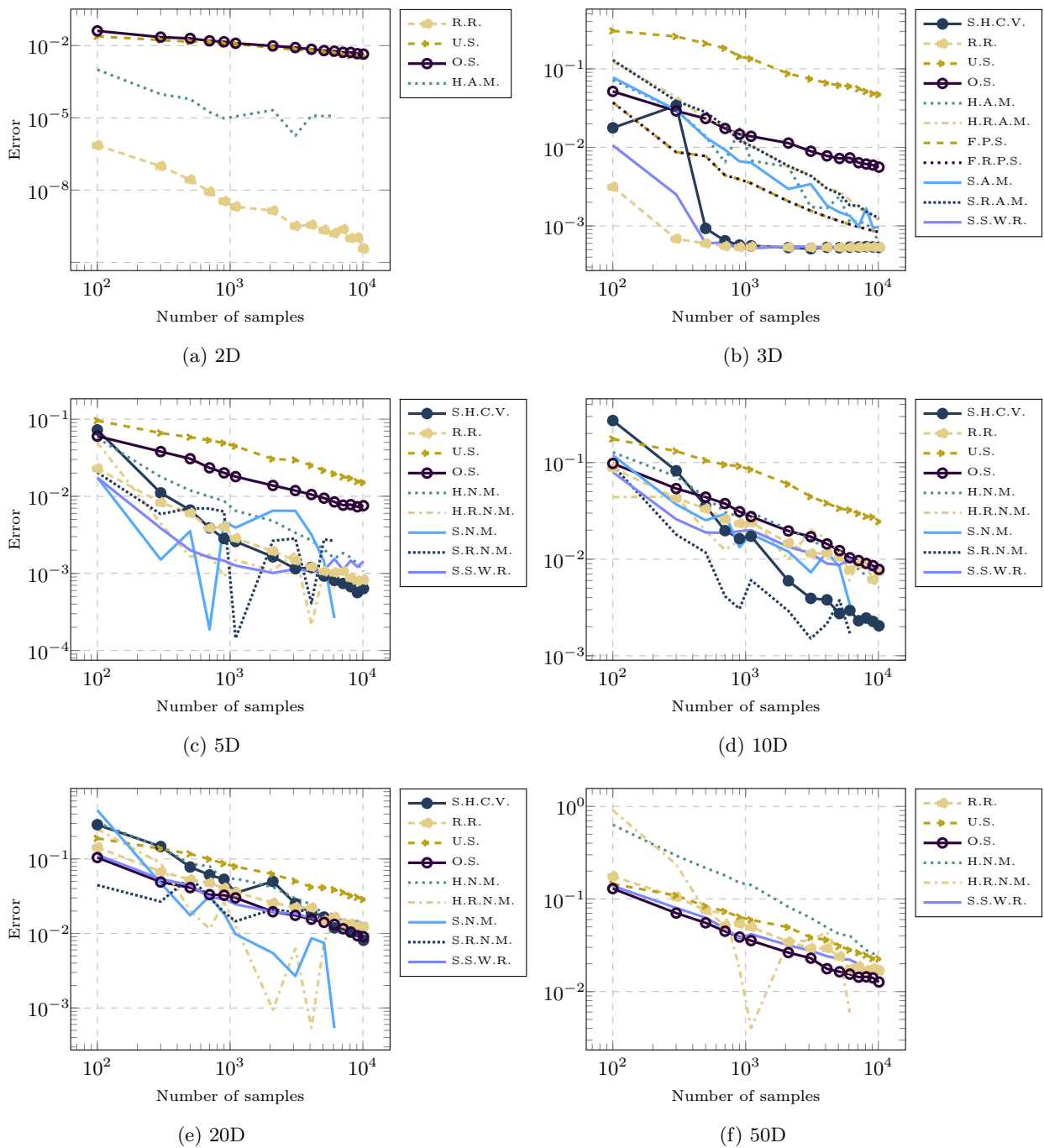

\begin{figure}[h!]
\begin{subfigure}{0.4\textwidth}
\begin{tikzpicture}
\begin{axis}[group style={group name=plots,},xlabel={Timer},
ylabel={Error}, legend pos = outer north east, legend cell align={left}, ytick={0.01 , 0.00001 , 0.00000001 , 0.00000000001 } , yticklabels = {$10^{-2}$,$10^{-5}$,$10^{-8}$,} ,xmode= log, ymode= log]
\addplot[curve3] table [x=Timers, y=Error, col sep=comma] {Experiments/ErrorsToy/combined_SWIncluded__riesz_smoothedin_2D.csv};
\addlegendentry{R.R.}d 
\addplot[curve4] table [x=Timers, y=Error, col sep=comma] {Experiments/ErrorsToy/combined_SWIncluded__unifin_2D.csv};
\addlegendentry{U.S.} 
\addplot[curve5] table [x=Timers, y=Error, col sep=comma] {Experiments/ErrorsToy/combined_SWIncluded__orthoin_2D.csv};
\addlegendentry{O.S.}
\addplot[curve10] table [x=Timers, y=Error, col sep=comma] {Experiments/ErrorsToy/combined_SWIncluded__haltonarea_smoothedin_2D.csv};
\addlegendentry{H.A.M.}
\end{axis}
\end{tikzpicture}
\caption{2D}
\label{tikz:toyTimerSWIncludedRate1}
\end{subfigure}
\hspace{1.7cm}
\begin{subfigure}{0.4\textwidth}
\begin{tikzpicture}
\begin{axis}[group style={group name=plots,},xlabel={Timer}, legend pos = outer north east, legend cell align={left}, xmode= log, ymode= log]
\addplot[curve1] table [x=Timers, y=Error, col sep=comma] {Experiments/ErrorsToy/combined_sphereharmonics_concatenated__in_3D.csv};
\addlegendentry{S.H.C.V.} 
\addplot[curve4] table [x=Timers, y=Error, col sep=comma] {Experiments/ErrorsToy/combined_SWIncluded__unifin_3D.csv};
\addlegendentry{U.S.}
\addplot[curve5] table [x=Timers, y=Error, col sep=comma] {Experiments/ErrorsToy/combined_SWIncluded__orthoin_3D.csv};
\addlegendentry{O.S.}
\addplot[curve10] table [x=Timers, y=Error, col sep=comma] {Experiments/ErrorsToy/combined_SWIncluded__haltonarea_smoothedin_3D.csv};
\addlegendentry{H.A.M.} 
\addplot[curve11] table [x=Timers, y=Error, col sep=comma] {Experiments/ErrorsToy/combined_SWIncluded__haltonarearandin_3D.csv};
\addlegendentry{H.R.A.M.} 
\addplot[curve12] table [x=Timers, y=Error, col sep=comma] {Experiments/ErrorsToy/combined_SWIncluded__fibin_3D.csv};
\addlegendentry{F.P.S.} 
\addplot[curve13] table [x=Timers, y=Error, col sep=comma] {Experiments/ErrorsToy/combined_SWIncluded__fibrand_in_3D.csv};
\addlegendentry{F.R.P.S.} 
\addplot[curve8] table [x=Timers, y=Error, col sep=comma] {Experiments/ErrorsToy/combined_SWIncluded__sobolarea_smoothedin_3D.csv};
\addlegendentry{S.A.M.} 
\addplot[curve9] table [x=Timers, y=Error, col sep=comma] {Experiments/ErrorsToy/combined_sobolarearand_in_3D.csv};
\addlegendentry{S.R.A.M.} 
\addplot[curve7] table [x=Timers, y=Error, col sep=comma] {Experiments/ErrorsToy/combined_SSWrand_in_3D.csv};
\addlegendentry{S.S.W.R.}
\end{axis}
\end{tikzpicture}
\caption{3D}
\label{tikz:toyTimerSWIncludedRate2}
\end{subfigure}
\\

\begin{subfigure}{0.4\textwidth}
\begin{tikzpicture}
\begin{axis}[group style={group name=plots,},xlabel={Timer},
ylabel={Error}, legend pos = outer north east, legend cell align={left}, xmode= log, ymode= log]
\addplot[curve1] table [x=Timers, y=Error, col sep=comma] {Experiments/ErrorsToy/combined_sphereharmonics_concatenated__in_5D.csv};
\addlegendentry{S.H.C.V.} 
\addplot[curve4] table [x=Timers, y=Error, col sep=comma] {Experiments/ErrorsToy/combined_SWIncluded__unifin_5D.csv};
\addlegendentry{U.S.}
\addplot[curve5] table [x=Timers, y=Error, col sep=comma] {Experiments/ErrorsToy/combined_SWIncluded__orthoin_5D.csv};
\addlegendentry{O.S.}
\addplot[curve10] table [x=Timers, y=Error, col sep=comma] {Experiments/ErrorsToy/combined_SWIncluded__halton_smoothed_lelucin_5D.csv};
\addlegendentry{H.N.M.}
\addplot[curve11] table [x=Timers, y=Error, col sep=comma] {Experiments/ErrorsToy/combined_haltonrand_leluc_in_5D.csv};
\addlegendentry{H.R.N.M.}
\addplot[curve8] table [x=Timers, y=Error, col sep=comma] {Experiments/ErrorsToy/combined_sobol_leluc_in_5D.csv};
\addlegendentry{S.N.M.}
\addplot[curve9] table [x=Timers, y=Error, col sep=comma] {Experiments/ErrorsToy/combined_sobolrand_leluc_in_5D.csv};
\addlegendentry{S.R.N.M.}
\addplot[curve7] table [x=Timers, y=Error, col sep=comma] {Experiments/ErrorsToy/combined_SSWrand_in_5D.csv};
\addlegendentry{S.S.W.R.}
\end{axis}
\end{tikzpicture}
\caption{5D}
\label{tikz:toyTimerSWIncludedRate3}
\end{subfigure}
\hspace{1.7cm}
\begin{subfigure}{0.4\textwidth}
\begin{tikzpicture}
\begin{axis}[group style={group name=plots,},xlabel={Timer}, legend pos = outer north east, legend cell align={left}, xmode= log, ymode= log]
\addplot[curve1] table [x=Timers, y=Error, col sep=comma] {Experiments/ErrorsToy/combined_sphereharmonics_concatenated__in_10D.csv};
\addlegendentry{S.H.C.V.}
\addplot[curve4] table [x=Timers, y=Error, col sep=comma] {Experiments/ErrorsToy/combined_SWIncluded__unifin_10D.csv};
\addlegendentry{U.S.}
\addplot[curve5] table [x=Timers, y=Error, col sep=comma] {Experiments/ErrorsToy/combined_SWIncluded__orthoin_10D.csv};
\addlegendentry{O.S.}
\addplot[curve10] table [x=Timers, y=Error, col sep=comma] {Experiments/ErrorsToy/combined_SWIncluded__halton_smoothed_lelucin_10D.csv};
\addlegendentry{H.N.M.}
\addplot[curve11] table [x=Timers, y=Error, col sep=comma] {Experiments/ErrorsToy/combined_haltonrand_leluc_in_10D.csv};
\addlegendentry{H.R.N.M.}
\addplot[curve8] table [x=Timers, y=Error, col sep=comma] {Experiments/ErrorsToy/combined_sobol_leluc_in_10D.csv};
\addlegendentry{S.N.M.}
\addplot[curve9] table [x=Timers, y=Error, col sep=comma] {Experiments/ErrorsToy/combined_sobolrand_leluc_in_10D.csv};
\addlegendentry{S.R.N.M.}
\addplot[curve7] table [x=Timers, y=Error, col sep=comma] {Experiments/ErrorsToy/combined_SSWrand_in_10D.csv};
\addlegendentry{S.S.W.R.} 
\end{axis}
\end{tikzpicture}
\caption{10D}
\label{tikz:toyTimerSWIncludedRate4}
\end{subfigure}
\\

\begin{subfigure}{0.4\textwidth}
\begin{tikzpicture}
\begin{axis}[group style={group name=plots,},xlabel={Timer},
ylabel={Error}, legend pos = outer north east, legend cell align={left}, xmode= log, ymode= log]
\addplot[curve1] table [x=Timers, y=Error, col sep=comma] {Experiments/ErrorsToy/combined_sphereharmonics_concatenated__in_20D.csv};
\addlegendentry{S.H.C.V.} 
\addplot[curve4] table [x=Timers, y=Error, col sep=comma] {Experiments/ErrorsToy/combined_SWIncluded__unifin_20D.csv};
\addlegendentry{U.S.}
\addplot[curve5] table [x=Timers, y=Error, col sep=comma] {Experiments/ErrorsToy/combined_SWIncluded__orthoin_20D.csv};
\addlegendentry{O.S.}
\addplot[curve10] table [x=Timers, y=Error, col sep=comma] {Experiments/ErrorsToy/combined_SWIncluded__halton_smoothed_lelucin_20D.csv};
\addlegendentry{H.N.M.}
\addplot[curve11] table [x=Timers, y=Error, col sep=comma] {Experiments/ErrorsToy/combined_haltonrand_leluc_in_20D.csv};
\addlegendentry{H.R.N.M.}
\addplot[curve8] table [x=Timers, y=Error, col sep=comma] {Experiments/ErrorsToy/combined_sobol_leluc_in_20D.csv};
\addlegendentry{S.N.M.}
\addplot[curve9] table [x=Timers, y=Error, col sep=comma] {Experiments/ErrorsToy/combined_sobolrand_leluc_in_20D.csv};
\addlegendentry{S.R.N.M.}
\addplot[curve7] table [x=Timers, y=Error, col sep=comma] {Experiments/ErrorsToy/combined_SSWrand_in_20D.csv};
\addlegendentry{S.S.W.R.}
\end{axis}
\end{tikzpicture}
\caption{20D}
\label{tikz:toyTimerSWIncludedRate5}
\end{subfigure}
\hspace{1.7cm}
\begin{subfigure}{0.4\textwidth}
\begin{tikzpicture}
\begin{axis}[group style={group name=plots,},xlabel={Timer}, legend pos = outer north east, legend cell align={left}, xmode= log, ymode= log]
\addplot[curve4] table [x=Timers, y=Error, col sep=comma] {Experiments/ErrorsToy/combined_SWIncluded__unifin_50D.csv};
\addlegendentry{U.S.}
\addplot[curve5] table [x=Timers, y=Error, col sep=comma] {Experiments/ErrorsToy/combined_SWIncluded__orthoin_50D.csv};
\addlegendentry{O.S.}
\addplot[curve10] table [x=Timers, y=Error, col sep=comma] {Experiments/ErrorsToy/combined_SWIncluded__halton_smoothed_lelucin_50D.csv};
\addlegendentry{H.N.M.}
\addplot[curve11] table [x=Timers, y=Error, col sep=comma] {Experiments/ErrorsToy/combined_haltonrand_leluc_in_50D.csv};
\addlegendentry{H.R.N.M.}
\addplot[curve7] table [x=Timers, y=Error, col sep=comma] {Experiments/ErrorsToy/combined_SSWrand_in_50D.csv};
\addlegendentry{S.S.W.R.}
\end{axis}
\end{tikzpicture}
\caption{50D}
\label{tikz:toyTimerSWIncludedRate6}
\end{subfigure}
\caption{{Distance estimation error as a function of computation time 
(seconds). Computation times include the point generation as well as the 
$\SW$ distance approximation.}}
\label{fig:comparisonTimerRateSWIncludedGauss}
\end{figure}
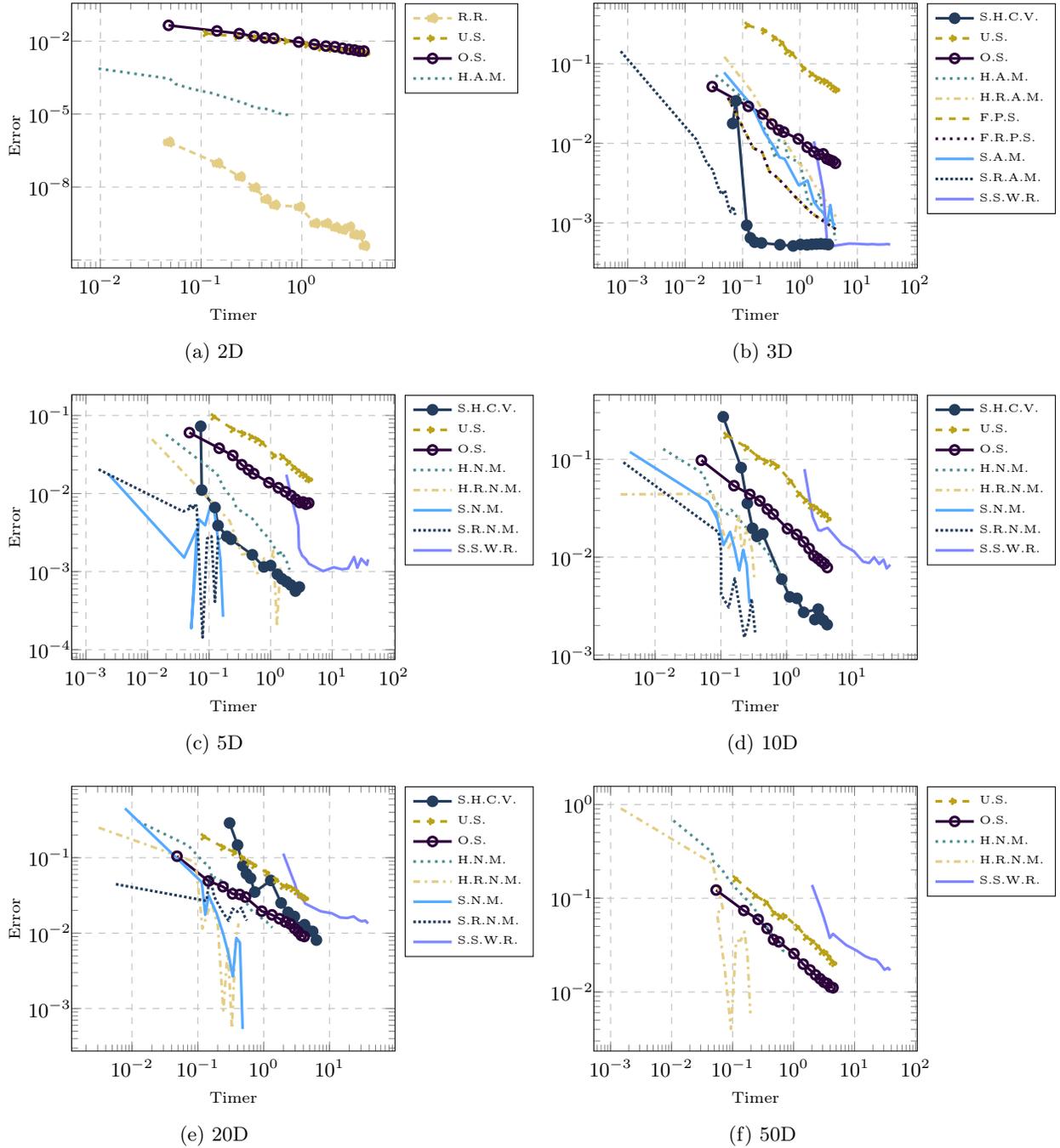

\begin{Rk}
Note that for the running time curves, we do not include the s-Riesz 
points configuration starting from the dimension 3 because it takes around 
$10^2$ seconds to generate $10^3$ points and $9\times 10^3$ seconds to generate 
$10^4$ points. {However, observe that those points, once generated, can be stored once for all to compute 
other $\SW$ distances or any other Monte Carlo estimation problems for functions defined 
on the unit sphere. This means that these configurations should not be discarded by default. 
For practical applications where the number of $\SW$ distances to compute is large, the computing time for these configurations can be factorized by the number of distances to compute and hence could become a negligible factor} {when the sampling number is moderate.}
\end{Rk}

\begin{Rk}
{Recalling the running time complexity $\mathcal{O}(TM^2)$ in \autoref{par:Riesz} and the running time results above, this shows that one needs to spend $9\times 10^7$ seconds to generate $10^6$ points. This demonstrates the limitation of this sampling method in terms of scalability, in other words when one needs a very large sampling number.}
\end{Rk}

\subsection{Persistence diagrams reduction dimension score}
\label{sec:tda}

{The goal of this section is to evaluate the relevance of the sampling 
methods studied in \autoref{sec:Sampling}, in the context of a concrete use 
case, involving two-dimensional real-life datasets. For that,
we focus in this section on 
\emph{persistence diagrams}, a popular object used in Topological Data Analysis 
\citep{edelsbrunner09}.}
Persistence diagrams are data abstractions encapsulating the features of 
interest {of complex input datasets (e.g. scalar fields)}
into 
simple {two-dimensional} representations. 
{Specifically, we consider an input dataset represented }
as a piecewise linear (PL) scalar field, namely a function $f: \M 
\rightarrow \R$ defined on a PL $(d_{\M})$-manifold $\M$ with $d_{\M} \leq 3$. 
Take a value $a\in\R$, we denote $\subf(a) = f^{-1}(]-\infty,a])$ the sub-level 
set of $f$ at $a$. While increasing $a$, the topology of $\subf(a)$ changes at 
the critical points of $f$ in $\M$. Those critical points are classified by 
their index $\I$: 0 for minima, 1 for 1-saddles, $d_{\M} - 1$ for 
$(d_{\M}-1)$-saddles and $d_{\M}$ for maxima. Following the Elder rule 
\citep{edelsbrunner09}, a topological feature of $\subf(a)$ (connected 
component, cycle, void) is associated with a pair of critical points $(c,c')$ 
such that $f(c) < f(c')$ and $\I_{c} = \I_{c'} - 1$.  This pair corresponds to 
{the}
\textit{birth} and \textit{death} 
{of the topological feature}
during the 
{sweep of the range from $-\infty$ to 
$+\infty$ by $a$,} 
and {it} is called a \textit{persistence pair}. As an example, when two 
connected components of $\subf(a)$ merge at a critical point $c'$, the younger 
one (created last) \textit{dies} to let the older one (created first) live on. 
Then those persistence pair{s} are represented as $2$D points where the 
horizontal coordinate corresponds to the \textit{birth} of a topological feature 
(noted $b = f(c)$) and where the vertical one corresponds to its \textit{death} 
(noted $d = f(c')$). The lifespan of a feature is called \textit{persistence} 
and is simply encoded as $b-d$. This representation is called the 
\textit{Persistence Diagram}, and its popularity in topological data analysis is 
explained by its stability to the addition of noise. See \autoref{fig:diagGauss} 
for a simple example of a persistence diagram.

\begin{Rk}
new{
Two persistence diagrams can have {a} different number of points, so to 
make it a balanced transport problem one has to augment them. Formally, denoting 
$d_1 =
\frac{1}{N_1}\sum\limits_{k=1}^{N_1} \delta_{x_k}$ , $d_2 = \frac{1}{N_2}\sum\limits_{k=1}^{N_2} \delta_{y_k}$ the diagrams, and noting $\Delta_{d_1} = \frac{1}{N_1}\sum\limits_{k=1}^{N_1} \delta_{\pi_{\Delta}(x_k)}$, $\Delta_{d_2} = \frac{1}{N_2}\sum\limits_{k=1}^{N_1} \delta_{\pi_{\Delta}(y_k)}$ their projections on the diagonal $\Delta$, one considers $\mu = \frac{1}{N} [N_1 d_1 + N_2 \Delta_{d_2}]$ and $\nu = \frac{1}{N} [N_2 d_2 + N_1 \Delta_{d_1}]$ as input measures with $N = N_1 + N_2$. Then the Sliced Wasserstein distance can be used to compare persistence diagrams as detailed by \citet{carriere2017slicedwassersteinkernelpersistence}.}
\end{Rk}

\begin{figure}[h!]
\centering
\includegraphics[scale=0.4]{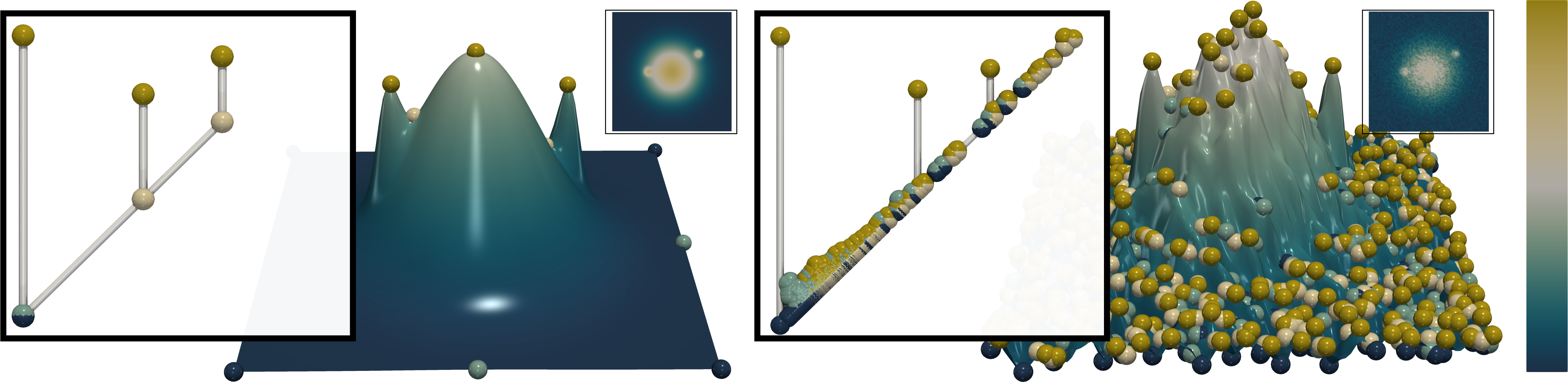}
\caption{A simple example of a persistence diagram issued from a gaussian 
mixture {(left).}
{On}
the right you can see that the persistence diagram 
is stable to the addition of noise.}
\label{fig:diagGauss}
\end{figure}

\noindent We present dimensionality reduction results on 12 
{ensembles}
of persistence diagrams \citep{ensembleBenchmark} described in 
\citep{pont_vis21}, which original scalar fields include simulated and acquired 
2D and 3D ensembles from SciVis constests \citep{scivisOverall}. {The 
dimensionality reduction techniques used are MDS \citep{kruskal78} and t-SNE 
\citep{tSNE} applied on distance matrices obtained by the SW estimations 
{between the persistence diagrams}. For a given technique, one 
quantifies its ability to preserve the cluster structure of an ensemble by 
running 
{the}
$k$-means algorithm in the 
{resulting}
2D-layouts.  
Then one evaluates the quality 
of the clustering with the normalized mutual information (NMI) and adjusted rand 
index (ARI){, which should both be equal to $1$ for a clustering that 
is identical to the classification ground-truth}.} \autoref{tab:diagDimReduct} 
shows the average clustering scores of both MDS \citep{kruskal78} and t-SNE 
\citep{tSNE}. First we take the average from distance matrices made by each 
$\SW$ estimates on all sampling number $\{100, 1100, 2100, 3100, 4100, 5100, 
6100, 7100, 8100, 9100, 10100\}$. Then we average again over all the 12 
different 
{ensembles}
of persistence diagrams. One can see that all the methods are 
quite similar. But overall the s-Riesz points configuration, which are just the 
$M$-th unity roots up to a rotation, is slightly better.

\begin{table}[h!]
 \caption{Average NMI and ARI scores for over all 12 
 {ensembles}
 of persistence diagrams.}
 \resizebox{0.45\linewidth}{!}{
 \begin{tabular}{ |p{3cm}||r|r|  }
  \hline
  Method&  MDS NMI & t-SNE NMI \\
  \hline
  Riesz & 0.74 & 0.65 \\
  Uniform & 0.74 & 0.59  \\
  Orthonormal & 0.75 & 0.63   \\
  Halton& 0.74 & 0.58 \\
  \hline
 \end{tabular}}
 \hfill
  \resizebox{0.45\linewidth}{!}{
 \begin{tabular}{ |p{3cm}||r|r|  }
  \hline
  Method&  MDS ARI & t-SNE ARI \\
  \hline
  Riesz & 0.64 & 0.51 \\
  Uniform & 0.64 & 0.44  \\
  Orthonormal & 0.64 & 0.48  \\
  Halton& 0.63 & 0.41 \\
  \hline
 \end{tabular}}
 \label{tab:diagDimReduct}
\end{table}

\subsection{3D Shapenet 55Core Data}

{This part details convergence results on a 3D dataset commonly used 
{as a benchmark when studying}
shape comparison {techniques}.} {So} as in 
\citep{nguyen2024quasimonte} and 
\citep{leluc2024slicedwassersteinestimationsphericalharmonics}, we took three 3D 
point clouds issued from the ShapenetCore dataset introduced by 
\citep{chang2015shapenetinformationrich3dmodel}. Among the different shapes in 
the dataset, we took one lamp, one plane and one bed; with all three of them 
having $N=2048$ points. \autoref{fig:shapes} {displays} the three 
data{sets} 
{considered} for this experiment.

\begin{figure}[h!]
\centering
\includegraphics[scale=0.04]{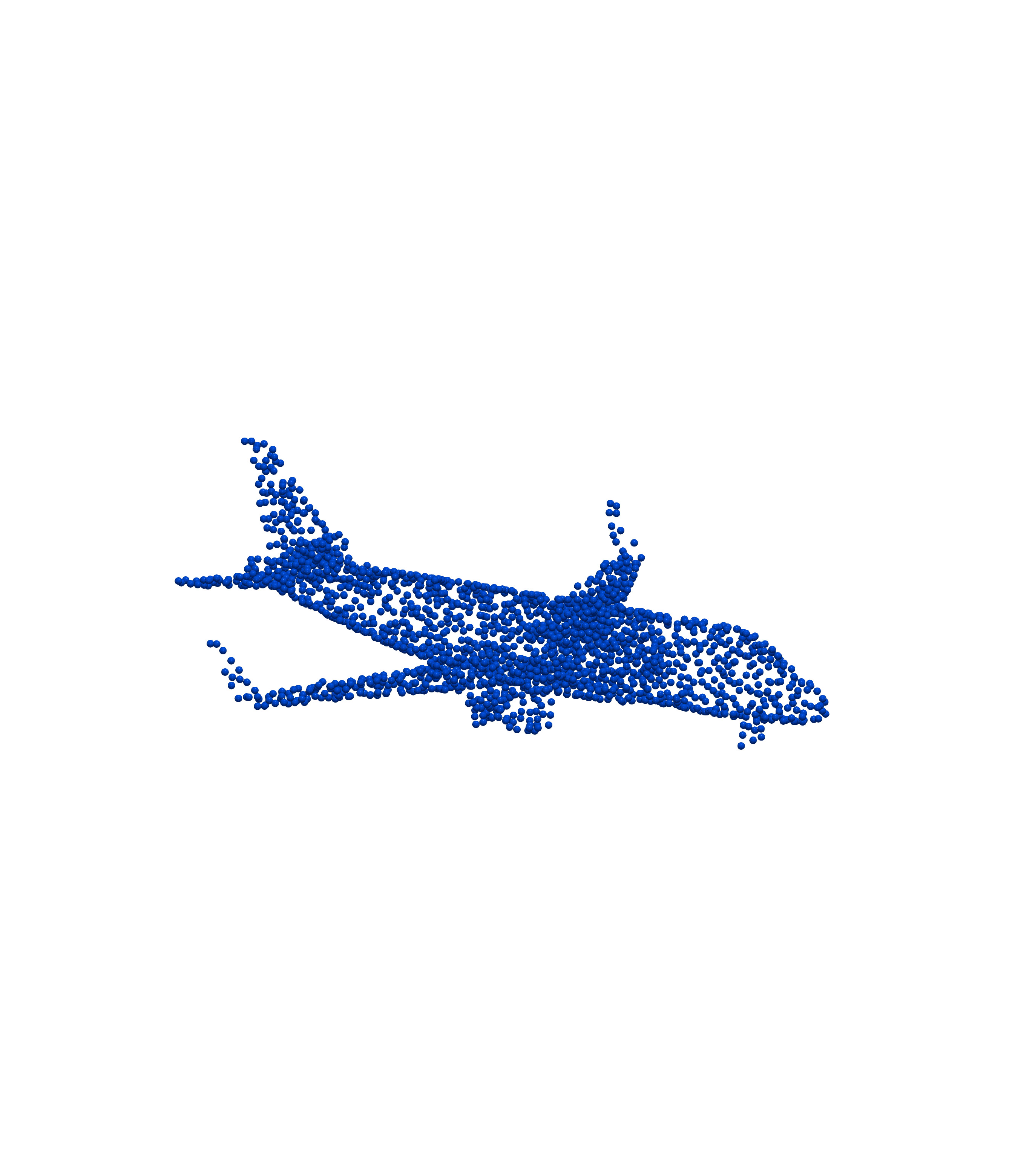}
\includegraphics[scale=0.04]{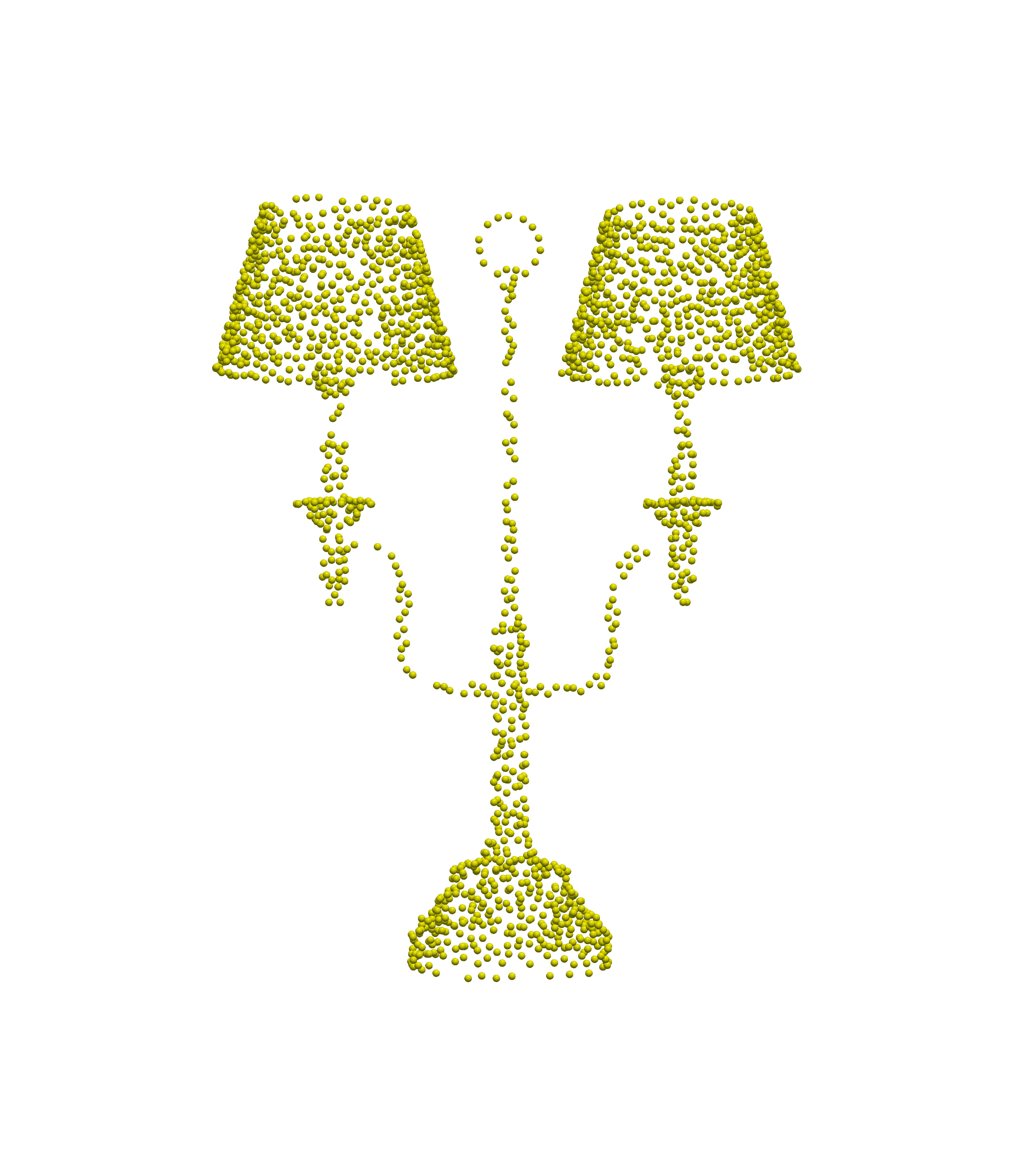}
\includegraphics[scale=0.04]{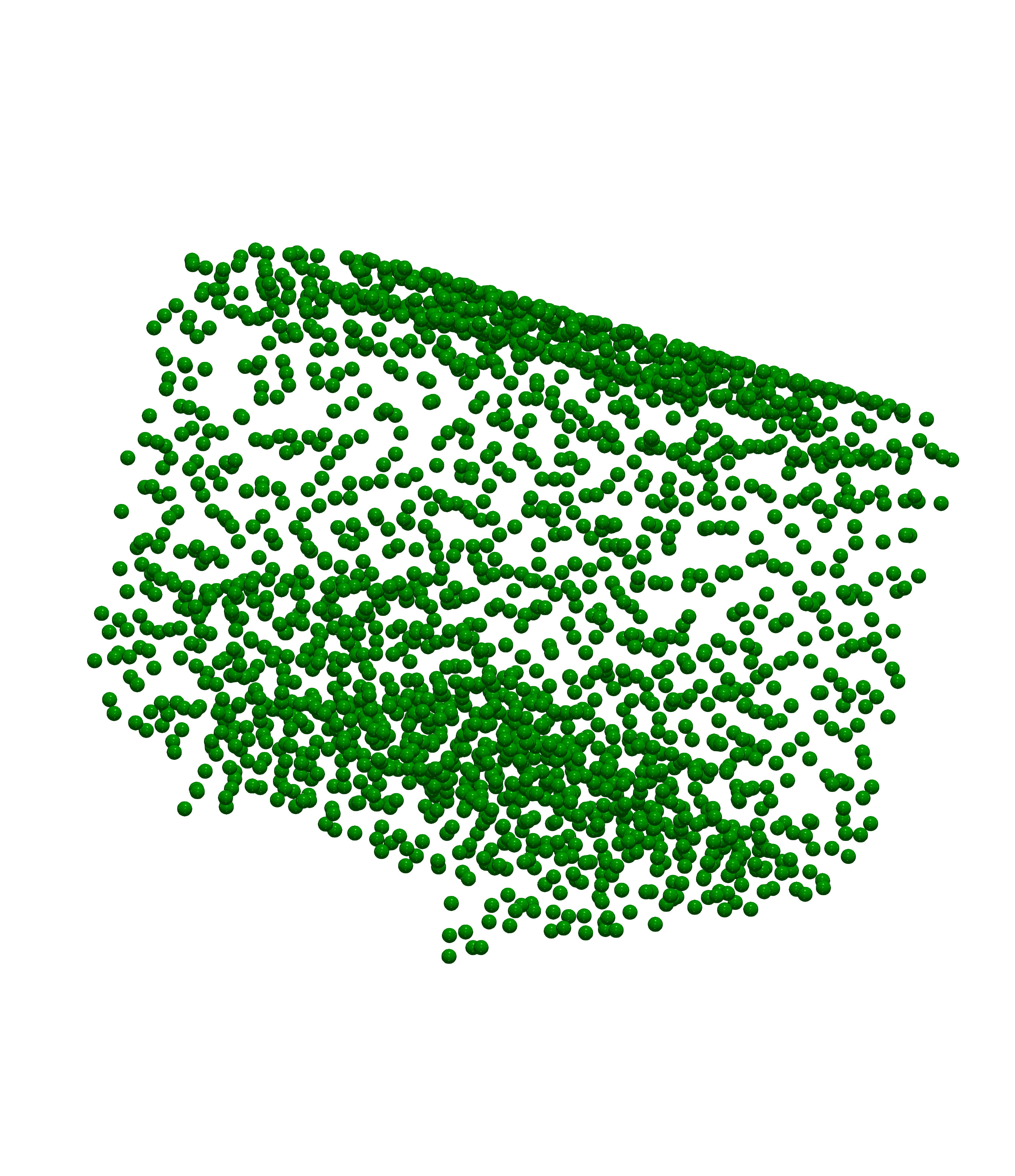}
\caption{The three 
{point clouds}
taken from the ShapenetCore 
{dataset}
{(}a plane, a lamp and a bed{)}.}
\label{fig:shapes}
\end{figure}

\begin{figure}[h!]
\centering
\begin{tikzpicture}
\begin{axis}[group style={group name=plots,},xlabel={Number of samples},
ylabel={Error}, legend pos = outer north east, legend cell align={left}, xmode= log, ymode= log,width = 0.3\textwidth, height = 0.3\textwidth]
\addplot[curve1] table [x=N_sample, y=Error, col sep=comma] {Experiments/Errors3DShape/errors_sphereharmonics_shapenet_1.csv};
\addplot[curve4] table [x=N_sample, y=Error, col sep=comma] {Experiments/Errors3DShape/errors_unif_shapenet_1.csv};
\addplot[curve5] table [x=N_sample, y=Error, col sep=comma] {Experiments/Errors3DShape/errors_ortho_shapenet_1.csv};
\addplot[curve6] table [x=N_sample, y=Error, col sep=comma] {Experiments/Errors3DShape/errors_halton_smoothed_shapenet_1.csv};
\addplot[curve7] table [x=N_sample, y=Error, col sep=comma] {Experiments/Errors3DShape/errors_haltonarearand_shapenet_1.csv};
\addplot[curve8] table [x=N_sample, y=Error, col sep=comma] {Experiments/Errors3DShape/errors_fib_shapenet_1.csv};
\addplot[curve9] table [x=N_sample, y=Error, col sep=comma] {Experiments/Errors3DShape/errors_fib_rand_shapenet_1.csv};
\addplot[curve3] table [x=N_sample, y=Error, col sep=comma] {Experiments/Errors3DShape/new_errors_riesz_rand_shapenet_1.csv};
\addplot[curve10] table [x=N_sample, y=Error, col sep=comma] {Experiments/Errors3DShape/errors_SSW_rand_shapenet_1.csv};
\end{axis}
\end{tikzpicture}
\hskip 5pt
\begin{tikzpicture}
\begin{axis}[group style={group name=plots,},xlabel={Number of samples}, legend pos = outer north east, legend cell align={left}, xmode= log, ymode= log,width = 0.3\textwidth, height = 0.3\textwidth]
\addplot[curve1] table [x=N_sample, y=Error, col sep=comma] {Experiments/Errors3DShape/errors_sphereharmonics_shapenet_2.csv};
\addplot[curve3] table [x=N_sample, y=Error, col sep=comma] {Experiments/Errors3DShape/new_errors_riesz_rand_shapenet_2.csv};
\addplot[curve4] table [x=N_sample, y=Error, col sep=comma] {Experiments/Errors3DShape/errors_unif_shapenet_2.csv};
\addplot[curve5] table [x=N_sample, y=Error, col sep=comma] {Experiments/Errors3DShape/errors_ortho_shapenet_2.csv};
\addplot[curve6] table [x=N_sample, y=Error, col sep=comma] {Experiments/Errors3DShape/errors_halton_smoothed_shapenet_2.csv};
\addplot[curve7] table [x=N_sample, y=Error, col sep=comma] {Experiments/Errors3DShape/errors_haltonarearand_shapenet_2.csv};
\addplot[curve8] table [x=N_sample, y=Error, col sep=comma] {Experiments/Errors3DShape/errors_fib_shapenet_2.csv};
\addplot[curve9] table [x=N_sample, y=Error, col sep=comma] {Experiments/Errors3DShape/errors_fib_rand_shapenet_2.csv};
\addplot[curve10] table [x=N_sample, y=Error, col sep=comma] {Experiments/Errors3DShape/errors_SSW_rand_shapenet_2.csv};
\end{axis}
\end{tikzpicture}
\hskip 5pt
\begin{tikzpicture}
\begin{axis}[group style={group name=plots,},xlabel={Number of samples}, legend pos = outer north east, legend cell align={left}, xmode= log, ymode= log,width = 0.3\textwidth, height = 0.3\textwidth]
\addplot[curve1] table [x=N_sample, y=Error, col sep=comma] {Experiments/Errors3DShape/errors_sphereharmonics_shapenet_3.csv};
\addlegendentry{S.H.C.V.}
\addplot[curve3] table [x=N_sample, y=Error, col sep=comma] {Experiments/Errors3DShape/new_errors_riesz_rand_shapenet_3.csv};
\addlegendentry{R.R}
\addplot[curve4] table [x=N_sample, y=Error, col sep=comma] {Experiments/Errors3DShape/errors_unif_shapenet_3.csv};
\addlegendentry{U.S}
\addplot[curve5] table [x=N_sample, y=Error, col sep=comma] {Experiments/Errors3DShape/errors_ortho_shapenet_3.csv};
\addlegendentry{O.S}
\addplot[curve6] table [x=N_sample, y=Error, col sep=comma] {Experiments/Errors3DShape/errors_halton_smoothed_shapenet_3.csv};
\addlegendentry{H.A.M}
\addplot[curve7] table [x=N_sample, y=Error, col sep=comma] {Experiments/Errors3DShape/errors_haltonarearand_shapenet_3.csv};
\addlegendentry{H.R.A.M}
\addplot[curve8] table [x=N_sample, y=Error, col sep=comma] {Experiments/Errors3DShape/errors_fib_shapenet_3.csv};
\addlegendentry{F.P.S}
\addplot[curve9] table [x=N_sample, y=Error, col sep=comma] {Experiments/Errors3DShape/errors_fib_rand_shapenet_3.csv};
\addlegendentry{F.R.P.S}
\addplot[curve10] table [x=N_sample, y=Error, col sep=comma] {Experiments/Errors3DShape/errors_SSW_rand_shapenet_3.csv};
\addlegendentry{S.S.W.R.}
\end{axis}
\end{tikzpicture}
\caption{Comparison of convergence rate results from the different sampling methods. The first plot shows errors made with respect to the $\SW$ distance between a lamp and a plane. The second one is between a plane and a bed. The last one corresponds to $\SW$ between a plane and a bed.}
\label{fig:comparisonConvRateShape}
\end{figure}
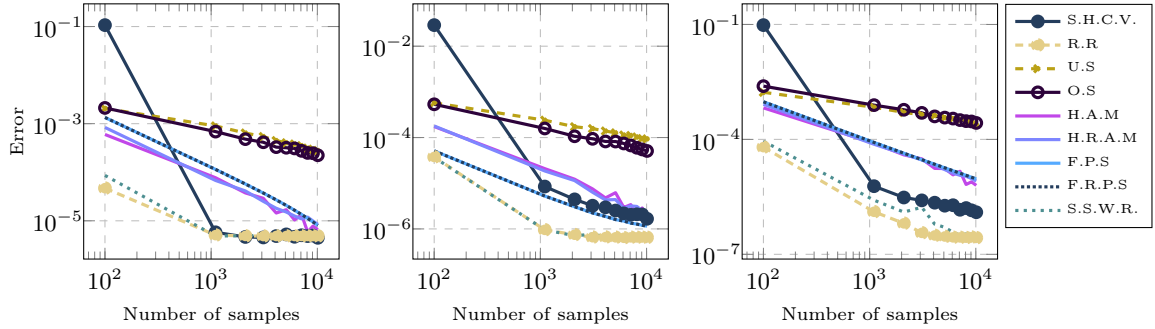

\autoref{fig:comparisonConvRateShape} shows different convergence curves of 
Sliced Wasserstein estimates between the three point clouds.
As in \autoref{sec:GaussExp}, the methods dominating are the Q.M.C., R.Q.M.C., S.S.W.
and 
{S.H.C.V.}
methods, especially the s-Riesz points configuration {and the Spherical Sliced Wasserstein sampling}.

\subsection{MNIST reduction dimension score} \label{sec:MNIST}

{The goal of this section is {twofold}. 
{First, it evaluates the practical convergence of the studied sampling 
methods on real-life high-dimensional datasets. Second, it describes an 
application of the SW distance for high-dimensional data, namely, 
dimensionality reduction.}
{For this, we select the classical MNIST dataset \citep{lecun1998mnist}.} 
{To construct our dataset, we represent each digit image as a point in 
$\R^{28 \times 28}$. For each class $\lbrace 0,1,2,3,4,5,6,7,8,9\rbrace$, we 
select {randomly} 600 digit images and divide them into groups of 200. This results 
in 30 point clouds {of $200$ points each, in $\R^{28 \times 28}$}, with 
{10 ground-truth classes.}
\autoref{fig:MNISTMatrix} illustrates the {$30\times 30$ matrix of $SW$ distances 
between all point clouds in the dataset}. new{We use MDS and t-SNE to produce 2D 
layouts from the distance matrices generated by the various sampling methods 
with different sample sizes. We then apply a clustering algorithm to these 2D 
layouts and average the clustering scores (NMI and ARI{, see 
\autoref{sec:tda}}) on all sampling numbers for all 
the 
{studied}
sampling strategies.} {Results are provided in \autoref{Tab:MNISTScore}. In such high dimension ($d=784$), we see that the performance of L.D.S. collapse, the three
sampling methods standing out being the s-Riesz points configuration, the uniform 
sampling and the orthonormal sampling}. 
\begin{figure}[h!]
\centering
\includegraphics[scale=0.1]{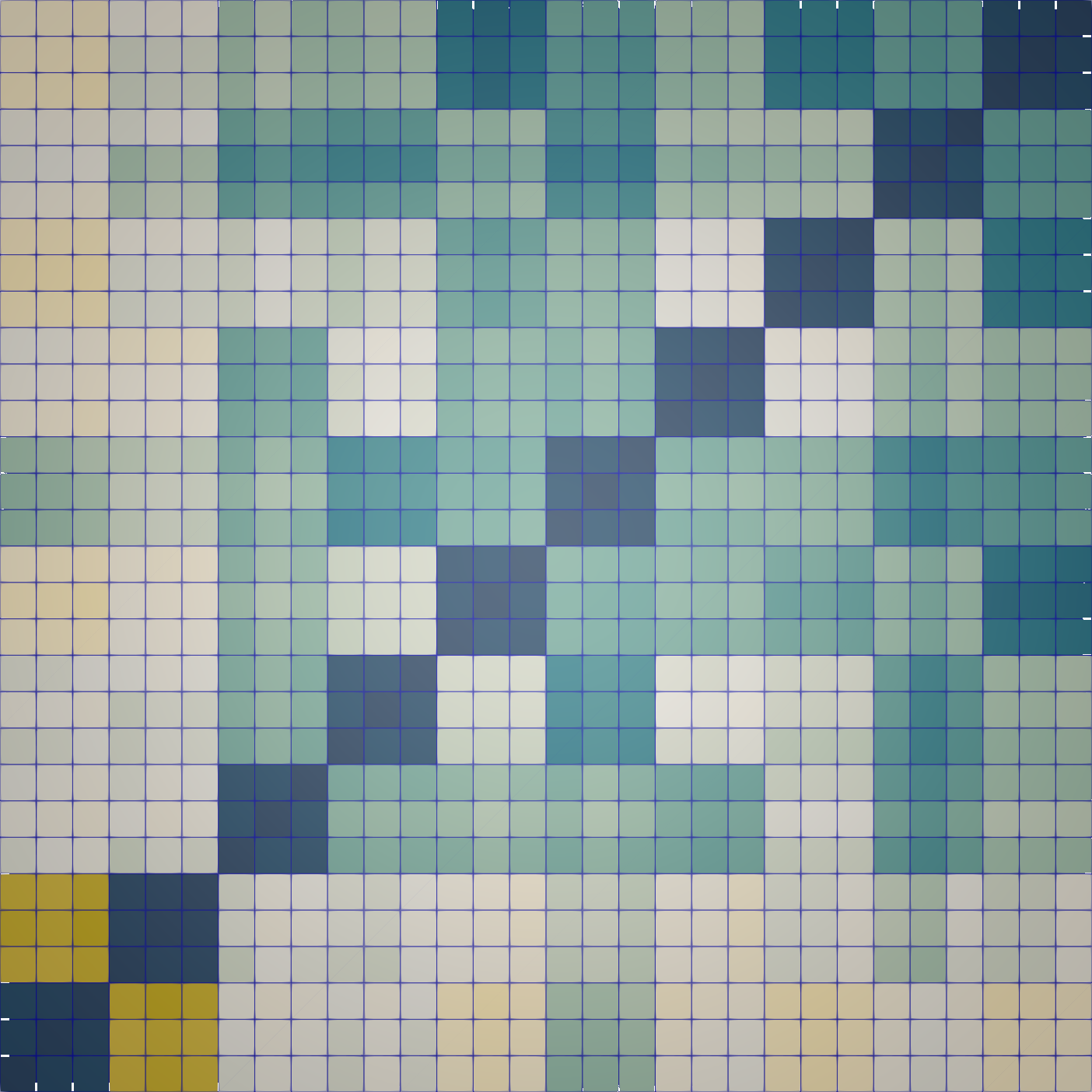}
\caption{Sliced Wasserstein distance matrix of our dataset using $10^6$ projections. All 10 classes, $\{0,1,2,3,4,5,6,7,8,9\}$, of 3 members each are well represented in the matrix.}
\label{fig:MNISTMatrix}
\end{figure}

\begin{table}[h!]
 \caption{Average NMI and ARI scores with standard deviation. Higher scores correspond to better clustering.}
 \resizebox{0.45\linewidth}{!}{
 \begin{tabular}{ |p{3cm}||r|r|  }
  \hline
  Method&  MDS NMI & t-SNE NMI \\
  \hline
  Riesz & 1 $\pm$ 0. & 0.98 $\pm$ 2e-2 \\
  Uniform & 1 $\pm$ 0. & 0.97 $\pm$ 4e-2 \\
  Orthonormal & 1 $\pm$ 0. & 0.98 $\pm$ 3e-2  \\
  Halton& 0.91 $\pm$ 1e-1 & 0.91 $\pm$ 9e-2\\
 {S.S.W.} & {1 $\pm$ 0.} & {0.98 $\pm$ 4e-2}\\
  \hline
 \end{tabular}}
 \hfill
  \resizebox{0.45\linewidth}{!}{
 \begin{tabular}{ |p{3cm}||r|r|  }
  \hline
  Method&  MDS ARI & t-SNE ARI \\
  \hline
  Riesz & 1 $\pm$ 0. & 0.95 $\pm$ 7e-2 \\
  Uniform & 1 $\pm$ 0. & 0.91 $\pm$ 1e-1  \\
  Orthonormal & 1 $\pm$ 0. & 0.94 $\pm$ 8e-2  \\
  Halton& 0.75 $\pm$ 2e-1 & 0.76 $\pm$ 2e-1 \\
 {S.S.W.} & {1 $\pm$ 0.} & {0.94 $\pm$ 1e-1}\\
  \hline
 \end{tabular}}
 \label{Tab:MNISTScore}
\end{table}

\section{Recommendation \& conclusion}\label{sec:Recommendation}

In this paper, we have studied several sampling strategies on the sphere 
for computing an {approximation} of the Sliced Wasserstein distance.

Regarding theoretical guarantees, this study highlighted the following limitations. {The classical i.i.d. sampling benefits from theoretical guarantees with a convergence rate in $O(1/\sqrt{M})$ and a time complexity linear in the number $M$ of projections. Orthonormal sampling and  L.D.S. such as Halton or Sobol lack convergence rate guarantees on the sphere (these guarantees being only obtained for sequences on hypercubes for L.D.S).} As for deterministic point generation methods (like Riesz), the Sliced Wasserstein integrand also lacks sufficient regularity to guarantee results in dimensions higher than 2.

{While lacking theoretical guarantees in terms of convergence, the experimental study  suggests that Q.M.C methods (L.D.S. or s-Riesz points) provide competitive results in small to intermediate dimension, while having a similar convergence rate to classical random sampling methods in intermediate to higher (for Riesz) dimensions. These results seem to indicate that, while $f$ 
is not regular enough for the convergence guarantees detailed in this paper, there may be some non-proven convergence results 
requiring weaker regularity conditions that would be applicable to $SW$.} 

{Now, considering computation times, as shown by~\autoref{fig:comparisonTimerRateSWIncludedGauss} and~\autoref{tab:summaryConvergenceAndComplexity},  classical i.i.d. sampling remains the slowest method in all our experiments. While orthonormal sampling lacks theoretical guarantees, it seems to be one of the most  efficient methods whatever the dimension, and is particularly competitive in high dimensions, with a very reasonable increase of computation time. L.D.S. methods also remain competitive in pratice for small dimensions.
s-Riesz points, while competitive in terms of convergence rate, have a prohibitive time complexity in $O(M^2)$, which makes them completely unsuitable for a large number of projections.} 

{The experiments also suggest that the S.H.C.V. method is very competitive in intermediate dimensions, while becoming less efficient when $d$ increases. }

{Based on the different experimental results provided in this paper, we make the following recommendations:
\begin{itemize}
\item For small dimensions (less than 3), Q.M.C. methods such as s-Riesz points or L.D.S. mapped onto the sphere can be privileged with respect to uniform sampling,
\item For high dimensions (greater than 20), the orthonormal sampling method emerges as the most suitable choice. {It is also one of the simplest methods to implement, which makes it particularly attractive in practice.}
\item For intermediate dimensions (between 5 and 10), choosing an appropriate method should depend on the experimental requirements. 
Spherical harmonics are an excellent option if computational resources are limited and if the number of $SW$ distances to be computed is low. 
However, it is worth noting that some Q.M.C. strategies, being independent of the 
input measures, have the advantage of allowing the generated points to be 
reused and of {allowing an independent computation in $M$ (except the Riesz points)}. This should be particularly beneficial when a high number of projections is required and a large number of $SW$ distances must be computed. In such cases, we suggest to store the samples to factorize the computing time across experiments.
\end{itemize}
}

\section*{Acknowledgments}{
\small
This work is partially supported by the European Commission grant
ERC-2019-COG \emph{``TORI''} (ref. 863464, \url{https://erc-tori.github.io/}).
JD acknowledge the support of the “France 2030” funding ANR-23-PEIA-0004 (“PDE-AI”) and the support of the funding SOCOT - ANR-23-CE40-0017.}

\bibliographystyle{tmlr}
\bibliography{main}

\end{document}